\documentclass{article}

\PassOptionsToPackage{numbers, compress}{natbib}



\usepackage[final]{neurips_2025}


\usepackage[utf8]{inputenc} 
\usepackage[T1]{fontenc}    
\usepackage{url}            
\usepackage{booktabs}       
\usepackage{amsfonts}       
\usepackage{nicefrac}       
\usepackage{microtype}      
\usepackage{xcolor}         
\usepackage{amsmath}
\usepackage[most]{tcolorbox}
\usepackage{enumitem}
\usepackage{graphicx}
\usepackage{amsthm}
\usepackage[table]{xcolor}
\usepackage{multirow}
\usepackage{nicematrix}
\usepackage{wrapfig}
\usepackage{graphicx}
\usepackage{subcaption}

\usepackage{hyperref}

\definecolor{myblue}{HTML}{E1F4FD}
\definecolor{uc}{HTML}{0645AD}

\hypersetup{
  colorlinks=true,
  linkcolor=black,
  urlcolor=uc,
  citecolor=magenta
} 

\theoremstyle{proposition}
\newtheorem{proposition}{Proposition}[section]

\theoremstyle{definition}
\newtheorem{definition}{Definition}[section]

\theoremstyle{theorem}
\newtheorem{theorem}{Theorem}[section]

\theoremstyle{lemma}
\newtheorem{lemma}{Lemma}[section]


\title{HoPE: Hybrid of Position Embedding for \\Long Context Vision-Language Models}

\author{
Haoran Li\textsuperscript{$1$}\quad
Yingjie Qin\textsuperscript{$2$}\quad
Baoyuan Ou\textsuperscript{$2$}\quad
Lai Xu\textsuperscript{$2$}\quad
Ruiwen Xu\textsuperscript{$2$}
\\
${\text{$^1$Carnegie Mellon University}}$ \quad ${\text{$^2$Xiaohongshu Inc.}}$
\\
\texttt{haoranl4@cs.cmu.edu}
}

\begin{document}

\maketitle

\begin{abstract}

Vision-Language Models (VLMs) have made significant progress in multimodal tasks. However, their performance often deteriorates in long-context scenarios, particularly long videos. While Rotary Position Embedding (RoPE) has been widely adopted for length generalization in Large Language Models (LLMs), extending vanilla RoPE to capture the intricate spatial-temporal dependencies in videos remains an unsolved challenge. Existing methods typically allocate different frequencies within RoPE to encode 3D positional information. However, these allocation strategies mainly rely on heuristics, lacking in-depth theoretical analysis. In this paper, we first study how different allocation strategies impact the long-context capabilities of VLMs. Our analysis reveals that current multimodal RoPEs fail to reliably capture semantic similarities over extended contexts. To address this issue, we propose HoPE, a Hybrid of Position Embedding designed to improve the long-context capabilities of VLMs. HoPE introduces a hybrid frequency allocation strategy for reliable semantic modeling over arbitrarily long contexts, and a dynamic temporal scaling mechanism to facilitate robust learning and flexible inference across diverse context lengths. Extensive experiments across four video benchmarks on long video understanding and retrieval tasks demonstrate that HoPE consistently outperforms existing methods, confirming its effectiveness. Our code is available at \url{https://github.com/hrlics/HoPE}.

\end{abstract}

\section{Introduction}
Vision-Language Models (VLMs) \cite{liu2023visual, wang2024qwen2, chen2024internvl, ye2024mplug, lu2024deepseek} have achieved remarkable success in multimodal tasks, including visual question answering \cite{chen2024spatialvlm, xia2025mmedrag, xia2024rule, xia2025mmedagent}, image captioning \cite{yang2023exploring, fei2023transferable}, cross-modal retrieval \cite{li2024generative, wang2024multimodal}, and more~\cite{wang2022learning, wang2024sclip, wang2025scaling}. However, VLMs suffer from significant performance degradation in long-context scenarios, particularly long videos \cite{zhang2024long, chen2024sharegpt4video, cheng2024videollama,maaz2024video}. In such settings, VLMs even struggle with simple tasks like object counting and temporal localization \cite{wang2024needle, wang2025multimodal}, revealing a critical limitation in their ability to model extended spatial-temporal dependencies. This limitation substantially hinders their real-world deployment, where input length often exceeds the context window they have been pretrained on.

Rotary Position Embedding (RoPE) \cite{su2024roformer} has been widely adopted for length generalization in text-based LLMs \cite{touvron2023llama, bai2023qwen, jiang2024mixtral}. Specifically, RoPE incorporates positional information by partitioning the query and key vectors into 2-dimensional pairs and rotating each pair at a unique frequency that decreases as the dimensional index increases. Despite its advantages, directly applying 1D RoPE fails to capture the intricate spatial-temporal dependencies in videos. Several methods have been proposed to extend 1D RoPE for multimodal inputs \cite{wang2024qwen2, heo2024rotary, wei2025videorope}. Among these, the most common approach is to allocate different frequencies to encode different positional components, as shown in the upper plots of Figure~\ref{fig:1}. For example, M-RoPE \cite{wang2024qwen2} allocates the \textit{highest} frequencies for temporal modeling (\(t\)), and the remaining low frequencies for spatial modeling (\(x,y\)). In contrast, VideoRoPE \cite{wei2025videorope} proposes to allocate the \textit{lowest} frequencies for temporal dimensions (\(t\)), and further applies a fixed scaling factor to scale the temporal indices of visual tokens, as shown in the lower plots of Figure~\ref{fig:1}. Despite their improved performance, two significant challenges remain unsolved. Firstly, current methods mainly rely on heuristics rather than theoretical analysis to determine the frequency allocation strategy. Second, applying a fixed and unidirectional scaling factor for all videos is suboptimal in real-world scenarios, where videos proceed at different speeds and demonstrate significant variance in information densities.

\begin{figure}[t]
    \centering
    \includegraphics[width=0.98\textwidth]{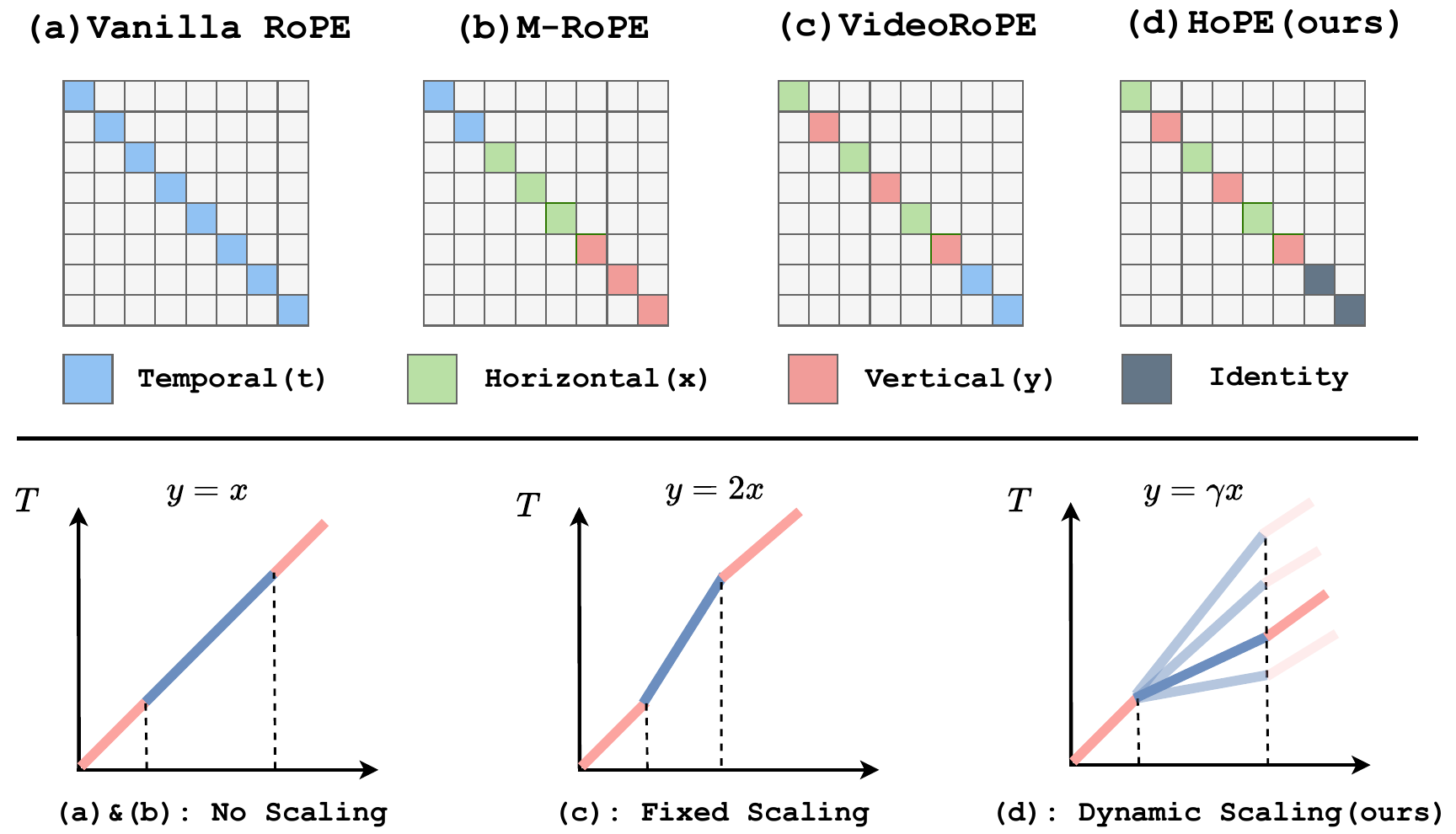}
    \caption{\textbf{Comparison of our HoPE and existing methods.} Upper plots illustrate the frequency allocation strategies in different RoPE variants. Here, frequency decreases along the diagonal. (d) HoPE sets the lowest frequencies to zero for reliable long-range semantic modeling. Lower plots demonstrate different temporal scaling mechanisms. (d) HoPE proposes dynamic and bidirectional scaling to learn temporal dynamics at multiple scales, facilitating robustness to various video speeds.}
     \vspace{-1.5em}
    \label{fig:1}
\end{figure}

To address these challenges, we begin with an in-depth theoretical analysis in Section~\ref{sec:analysis}, outlining the ideal properties that a multimodal RoPE should possess. Our analysis reveals that: (1) the flattening operation in vanilla RoPE inherently violates spatial-temporal locality prior, which is crucial in video modeling; (2) despite diverse frequency allocation strategies, existing multimodal RoPE variants fail to reliably capture semantic similarities over extended contexts; (3) temporal scaling of visual tokens should include both compression and expansion to accommodate varying video speeds in real-world scenarios. Guided by these insights, we propose HoPE, a \textbf{H}ybrid \textbf{o}f \textbf{P}osition \textbf{E}mbedding designed to improve the long-context capabilities of VLMs. As shown in Figure~\ref{fig:1}, HoPE first introduces a hybrid
frequency allocation strategy to facilitate long-range semantic modeling. In this strategy, higher frequencies, which are more sensitive to positional differences and better at capturing local features, are allocated to spatial components (\(x,y\)) in an interleaved manner. Meanwhile, the lower frequencies are directly set to zero and allocated to temporal component (\(t\)) to enable reliable semantic modeling. Moreover, HoPE develops a dynamic temporal scaling mechanism for lengthy visual tokens. This mechanism not only enhances VLMs' robustness to various video speeds, which are common in real-world scenarios, but also offers flexible scaling during inference across diverse context lengths.

We summarize our contributions as follows:
\vspace{-2pt}
\begin{itemize}[leftmargin=8pt, itemsep=-2pt]
    \item To our best knowledge, we provide the first theoretical analysis of how different frequency allocation strategies in multimodal RoPEs impact the long-context capabilities of VLMs, offering insights for the design and analysis of future multimodal RoPEs.
    
    \item Guided by our analysis, we propose HoPE, which consists of a hybrid frequency allocation strategy for reliable semantic modeling in long-context scenarios, and a dynamic temporal scaling mechanism for robust and flexible temporal comprehension.
        
    \item Extensive experiments on four video benchmarks demonstrate that HoPE consistently outperforms existing RoPE variants, achieving improvements of 22.23\% and 8.35\% on long video retrieval and long video understanding tasks, confirming its effectiveness.
\end{itemize}

\section{Preliminaries}
\label{sec:preli}
\textbf{Rotary Position Embedding (RoPE).} Current Transformer-based LLMs rely on Positional Encodings (PEs) to incorporate sequential information into the attention mechanism. Among various PEs, Rotary Position Embedding (RoPE) \cite{su2024roformer} has emerged as a dominant approach for long-context modeling in text-based LLMs \cite{touvron2023llama, bai2023qwen, jiang2024mixtral}. The key to RoPE's success lies in its ability to encode \textit{relative} position information through an \textit{absolute} positional encoding approach, ensuring both effectiveness and efficiency. Consider query and key vectors with \(d\) dimensions (where \(d\) is an even number), RoPE partitions the dimensions into \(d/2\) pairs, e.g., \(\mathbf{q}_n=[\mathbf{q}_n^{(0)}; \mathbf{q}_n^{(1)};\dots;\mathbf{q}_n^{(d/2-1)}]\). Each pair of dimensions is assigned a unique rotation angle \(\theta_i=b^{-2i/d}, i \in \{0,1,\dots,d/2-1\}\), where \(b\) is a pre-defined frequency base and set to \(10,000\) by default \cite{su2024roformer}. This rotation can be achieved through a rotation matrix as follows:

\begin{equation}
    r^{(i)}=
    \begin{pmatrix}
        \cos\theta_i & -\sin\theta_i \\
        \sin\theta_i  &  \cos\theta_i \\
    \end{pmatrix}.
\end{equation}

The overall rotation matrix \(\mathbf{R}_n\) is constructed by concatenating each pair's rotation matrix along the diagonal to form a block-diagonal matrix, i.e., \(\mathbf{R}_n=\operatorname{diag}(r^{(0)},r^{(1)},\dots,r^{(d/2-1)}) \in \mathbb{R}^{d \times d}\). Therefore, during attention computation, the attention score\footnote{Here, we omit the softmax function and \(1/\sqrt{d}\) scaling in standard Transformer \citep{vaswani2017attention} for simplicity.} \(\mathbf{A}_{n,m}\) between the \(n\)-th query \(\mathbf{q}_n\) and \(m\)-th key \(\mathbf{k}_m\) is:

\begin{equation}
    \mathbf{A}_{n,m}=(\mathbf{q}_n\mathbf{R}_n)(\mathbf{k}_m\mathbf{R}_m)^{\top}=\mathbf{q}_n\mathbf{R}_{n-m}\mathbf{k}_m^{\top},
\end{equation}

where the rotation matrix \(\mathbf{R}_{n-m}\) can be formulated as:

\begin{equation}
    \small
    \mathbf{R}_{n-m}=
    \begin{pmatrix}
        \cos\theta_0(n-m) & -\sin\theta_0(n-m)  & \cdots & 0 & 0 \\
        \sin\theta_0(n-m)  &  \cos\theta_0(n-m)  & \cdots & 0 & 0 \\
        \vdots & \vdots & \ddots & \vdots & \vdots \\
        0 & 0 & \cdots & \cos\theta_{(d/2-1)}(n-m)  & -\sin\theta_{(d/2-1)}(n-m) \\
        0 & 0 & \cdots & \sin\theta_{(d/2-1)}(n-m) &  \cos\theta_{(d/2-1)}(n-m)
    \end{pmatrix}.
    \label{eq:1D_RoPE}
\end{equation}

It can be observed that through pairwise attention computation, the final rotation matrix naturally incorporates the \textit{relative} position information \((n-m)\) between the query-key pair.

\textbf{No Positional Encoding (NoPE).} Despite the popularity of RoPE, several works have pointed out that the \textit{causal} attention mechanism in current decoder-only LLMs implicitly learns \textit{absolute} positional information \cite{haviv2022transformer, kazemnejad2023impact, wang2024length}. This motivates the development of No Positional Encoding (NoPE). Specifically, the \textit{causal} attention mask enforces $A_{n,m} = 0$ for all $m > n$, ensuring that each token only attends to itself and previous tokens. Under this constraint, the attention score with NoPE is a simple dot product between the query vector \(\mathbf{q}_n\) and the key vector \(\mathbf{k}_m\), i.e., \(\mathbf{A}_{n,m}=\mathbf{q}_n\mathbf{k}_m^{\top}\), providing no explicit positional information to Transformers.





\section{Analysis}
\label{sec:analysis}
In this section, we conduct a comprehensive theoretical analysis of multimodal RoPE variants, aiming to answer the following questions: \textbf{(1)} Is vanilla RoPE enough for long-context VLMs? \textbf{(2)} How do different frequency allocation strategies impact semantic modeling in long-range multimodal contexts? \textbf{(3)} How should we assign the temporal indices for text and visual tokens?

\subsection{Vanilla RoPE Fails in Spatial-Temporal Structure}

Several recent VLMs \cite{liu2023visual, lu2024deepseek, maaz2024video, zhang2023video, lin2024video, li2024lite} still use vanilla RoPE for multimodal inputs. In their approach, each video frame is first encoded by a vision encoder (e.g., ViT \cite{dosovitskiy2021an}) and then flattened into a sequence of patch-level tokens. These visual tokens will be treated equally as text tokens for positional encoding, with each token incorporating only 1D temporal information. We show in Proposition~\ref{prop:1drope_for_video} that this approach, while easy to implement, distorts spatial-temporal localities and fundamentally limits VLMs' ability to model extended spatial-temporal dependencies.

\begin{proposition}[1D RoPE violates spatial-temporal locality priors] 
Given any query \(\mathbf{q}\) at position \((t,x,y)\) and a relative distance of 1 in spatial or temporal dimensions, the flattening operation in 1D RoPE distorts the relative distance with a magnitude dependent on the frame resolution.
\label{prop:1drope_for_video}
\end{proposition}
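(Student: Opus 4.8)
The plan is to make the notion of ``distortion'' precise by tracking how the raster-scan flattening sends a unit step in each of the three coordinates $(t,x,y)$ to a (generally much larger) step in the 1D index that vanilla RoPE consumes, and then to observe that, because the RoPE attention score $\mathbf{A}_{n,m}$ depends only on the scalar gap $n-m$ through the rotation $\mathbf{R}_{n-m}$ in Eq.~\eqref{eq:1D_RoPE}, this index inflation is exactly the quantity that controls how positional information couples two patches.

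First I would fix notation for the flattening. Writing $W$ for the number of patches per row and $H$ for the number of rows in a single frame, the standard row-major flattening assigns to the patch at temporal index $t$ and spatial location $(x,y)$ the 1D position $n(t,x,y) = t\,WH + y\,W + x$; any other fixed ordering of the three coordinates merely permutes the roles of $W$, $H$, and $WH$ in what follows. Then I would compute the induced 1D displacement for each unit move from the query's location $(t,x,y)$: a unit horizontal step gives $|n(t,x{+}1,y)-n(t,x,y)| = 1$, a unit vertical step gives $|n(t,x,y{+}1)-n(t,x,y)| = W$, and a unit temporal step gives $|n(t{+}1,x,y)-n(t,x,y)| = WH$. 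Taking the distortion of a unit coordinate move to be the discrepancy between the 1D distance RoPE actually uses and the intended distance $1$, the vertical and temporal distortions are $W-1$ and $WH-1$; both are strictly increasing in the frame resolution and grow without bound as the resolution increases, while only the single fastest-varying coordinate is left undistorted.

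To connect this to the spatial-temporal locality prior, I would note that $\mathbf{R}_{n-m}$ makes the positional interaction of a query and a key a function of $n-m$ alone, so two tokens that are genuinely adjacent in the video---relative distance $1$ in $y$ or in $t$---are handled as though they sat $W$ or $WH$ indices apart, i.e.\ positionally indistinguishable from a pair of patches separated by an entire row or an entire frame. Because the per-frequency phases $\theta_i(n-m)$, and hence the effective positional similarity, vary with $n-m$, this is not a benign relabeling: locally structured content is encoded at the same distance scale as genuinely far-apart content, which is precisely the violation asserted in the statement. I expect the only real subtlety---rather than a genuine obstacle---to be choosing the ``right'' scalar summary of distortion (the additive gap $WH-1$ versus the multiplicative blow-up $WH$) so that the dependence on frame resolution is stated cleanly and remains robust to the choice of flattening order; the underlying combinatorics is immediate.
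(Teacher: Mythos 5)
Your proposal is correct and follows essentially the same route as the paper's proof: compute the 1D index differences induced by raster-scan flattening for unit spatial and temporal moves, obtaining gaps of order $W$ and $HW$, and conclude that these resolution-dependent inflations distort adjacency since RoPE's attention depends only on the scalar index gap. The extra remarks (the undistorted fastest-varying coordinate, the explicit link to $\mathbf{R}_{n-m}$) are harmless elaborations of the same argument.
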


We provide the proof in Appendix~\ref{app:proof_1drope}. This mismatch between positional encoding and the 3D structure of videos creates distorted attention patterns, making it difficult for models to learn meaningful spatial-temporal relationships essential for video-related tasks.


\begin{tcolorbox}[
  colback=myblue,  
  colframe=black,  
  arc=0.8mm,                 
  boxrule=1pt,         
  left=2mm, right=2mm, top=1mm, bottom=1mm,
  enhanced,
  width=\linewidth,
]
\textbf{Conclusion 1.} Directly applying vanilla RoPE to multimodal long-context inputs inherently fails to capture their complex spatial-temporal dependencies.
\end{tcolorbox}

\subsection{Current Multimodal RoPEs Are Unreliable in Long-Range Semantic Modeling}

To capture the spatial-temporal structure of multimodal inputs, a recent VLM, Qwen2-VL \cite{wang2024qwen2}, has introduced a Multimodal Rotary Position Embedding (M-RoPE). Concretely, M-RoPE partitions the $128$-dimensional rotary embedding into three distinct groups: the first $32$ dimensions for temporal information ($t$), the subsequent $48$ dimensions for horizontal spatial information ($x$), and the final $48$ dimensions for vertical spatial information ($y$), i.e., \(\mathbf{R}_{t,x,y} = \operatorname{diag}(\mathbf{R}_t, \mathbf{R}_x,\mathbf{R}_y)\). While this approach realizes a naive extension for RoPE, a fundamental question remains to be answered: 
\begin{center}
    \textit{How do different frequency allocation strategies impact the performance of multimodal RoPE?} 
\end{center}
This question arises from the fact that in RoPE, different dimensions carry unique frequencies ($\theta_{i}=b^{-2i/d}$, $i \in \{0,1,\dots,d/2-1\}$), as shown in Equation~\ref{eq:1D_RoPE}. Therefore, different strategies exist for frequency allocation in multimodal RoPE. As shown in Figure~\ref{fig:1}, M-RoPE allocates the highest frequencies for \(t\), intermediate frequencies for \(x\), and the lowest frequencies for \(y\). In contrast, VideoRoPE \cite{wei2025videorope} proposes to assign the lowest frequencies to temporal modeling \((t)\) and high frequencies to spatial dimensions \((x,y)\). Their empirical justification stems from attention pattern analysis, which reveals that dimensions encoded with the lowest frequencies exhibit a more pronounced \textit{attention sink} phenomenon \cite{xiao2024efficient}, which has proven to be effective in long-context modeling. However, we argue that using the lowest frequencies for temporal modeling is still unreliable in capturing semantic similarities in extended multimodal contexts. Specifically, we first introduce semantic preference, a property where attention mechanisms should prioritize semantically similar tokens regardless of their relative distance, and formally define this concept in Definition~\ref{def:semantic_pref}.

\begin{definition}[Semantic Preference]
For any query vector $\mathbf{q}$ and a semantically similar key vector $\mathbf{k}'$ that can be expressed as $\mathbf{k}' = \mathbf{q} + \delta$ where $\delta$ is a zero-mean perturbation, the attention score with RoPE should satisfy:
\begin{equation*}
\mathbb{E}_{\mathbf{q},\mathbf{k},\delta}[\mathbf{q}\mathbf{R}_{\Delta t \Delta x \Delta y }\mathbf{k}'^{\top}-\mathbf{q}\mathbf{R}_{\Delta t \Delta x \Delta y}\mathbf{k}^{\top}]\ge 0,
\end{equation*}
where $\mathbf{k}$ is the key vector of a semantically unrelated token. This preference should hold regardless of the relative distance (\(\Delta t, \Delta x, \Delta y\)) between the query and key.
\label{def:semantic_pref}
\end{definition}

Then, we show in Theorem~\ref{theorem:semantic_preference} that \textbf{\textit{all}} frequency allocation strategies of current multimodal RoPEs, including selecting the highest/lowest frequencies for temporal modeling, are unreliable in maintaining the semantic preference property over extended contexts. This limitation arises because, with sufficiently long contexts, even the lowest frequencies can produce arbitrary rotations, ultimately undermining semantic preference. We provide the proof in Appendix~\ref{app:semantic_pref}.

\begin{theorem}
    Let $X = [x_1, x_2, \dots, x_L]$ be an input sequence, and let RoPE use any fixed set of temporal frequencies (e.g., highest or lowest). Then there exists a critical length $L_c$ such that for all $L \ge L_c$, the semantic preference property (Definition~\ref{def:semantic_pref}) is violated.
    \label{theorem:semantic_preference}
\end{theorem}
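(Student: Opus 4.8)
The plan is to collapse the semantic-preference inequality to a sign condition on a single scalar function of the relative position, show that a single RoPE frequency already drives that function negative at large distances, and then rule out the possibility that interference between the many frequencies conspires to keep it nonnegative.

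\textbf{Step 1 (reduction).} Substituting $\mathbf{k}' = \mathbf{q}+\delta$ into Definition~\ref{def:semantic_pref}, the left-hand side becomes $\mathbb{E}[\mathbf{q}\mathbf{R}_{\Delta t \Delta x \Delta y}\mathbf{q}^{\top}] + \mathbb{E}[\mathbf{q}\mathbf{R}_{\Delta t \Delta x \Delta y}\delta^{\top}] - \mathbb{E}[\mathbf{q}\mathbf{R}_{\Delta t \Delta x \Delta y}\mathbf{k}^{\top}]$. Under the standing assumptions that $\delta$ is zero-mean and independent of $\mathbf{q}$, that the unrelated key $\mathbf{k}$ is independent of $\mathbf{q}$, and that query/key features are centered, the last two terms vanish and the property is equivalent to $\mathbb{E}[\mathbf{q}\mathbf{R}_{\Delta t \Delta x \Delta y}\mathbf{q}^{\top}] \ge 0$. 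Since $\mathbf{R}_{\Delta t \Delta x \Delta y}$ is block-diagonal with $2\times2$ rotation blocks (Equation~\ref{eq:1D_RoPE}), expanding block by block using $[\,q_{2i},q_{2i+1}\,]\,r^{(i)}\,[\,q_{2i},q_{2i+1}\,]^{\top}=(q_{2i}^2+q_{2i+1}^2)\cos\phi_i$ gives
\begin{equation}
\mathbb{E}[\mathbf{q}\mathbf{R}_{\Delta t \Delta x \Delta y}\mathbf{q}^{\top}] = \sum_{i} w_i \cos(\theta_i\, m_i) =: g(\Delta t,\Delta x,\Delta y),
\end{equation}
where $w_i=\mathbb{E}[q_{2i}^2+q_{2i+1}^2]>0$ and $m_i\in\{\Delta t,\Delta x,\Delta y\}$ is whichever relative coordinate pair $i$ is assigned to under the chosen allocation. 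Note that only this assignment $i\mapsto m_i$ depends on the strategy (highest, lowest, scaled, \dots), so everything below is strategy-agnostic.

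\textbf{Step 2 (one frequency already misbehaves, but the sum is the issue).} Fix any temporal pair with frequency $\theta$; since $0<\theta\le 1$, once $|m_i|>\pi/(2\theta)$ we have $\cos(\theta m_i)<0$, and near $|m_i|=\pi/\theta$ it is close to $-1$. Thus each individual temporal term is pushed negative at large relative distance, regardless of whether $\theta$ is the highest or lowest frequency (a fixed temporal scaling factor merely replaces $\theta$ by $\theta/s$ and shifts this threshold). The genuine obstacle is that $g$ is a \emph{sum} of such terms at distinct, generically incommensurate frequencies, which are out of phase and could a priori cancel so that $g\ge0$ everywhere.

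\textbf{Step 3 (the sum is forced below zero, giving $L_c$) and the main difficulty.} This last point is exactly the crux, and it is resolved by observing that $g$ is a finite trigonometric polynomial on $\mathbb{Z}^3$ with \emph{zero Bohr mean}: the Ces\`aro average of $\cos(\theta_i m_i)$ vanishes because no $\theta_i$ is an integer multiple of $2\pi$ (indeed $0<\theta_i\le1$), so $\lim_{N\to\infty}N^{-3}\sum_{\mathbf m\in[0,N)^3}g(\mathbf m)=0$. Yet $g(0,0,0)=\sum_i w_i>0$, so $g\not\equiv0$. A nonnegative almost-periodic function with zero mean must vanish identically (equivalently, by Weyl equidistribution the orbit $\{(\theta_i m_i)_i:\mathbf m\in\mathbb{Z}^3\}$ fills a subtorus on which $\int g\,d(\mathrm{Haar})=0$, forcing $g\equiv0$ there if $g\ge0$, contradicting $g(0)>0$). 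Hence there is $\mathbf m^{*}=(\Delta t^{*},\Delta x^{*},\Delta y^{*})\in\mathbb{Z}^3$ with $g(\mathbf m^{*})<0$. Setting $L_c=1+\max(|\Delta t^{*}|,|\Delta x^{*}|,|\Delta y^{*}|)$, for every $L\ge L_c$ the relative position $\mathbf m^{*}$ is realizable inside a length-$L$ sequence, and there the reduced inequality $g\ge0$ fails, i.e.\ semantic preference is violated; a quantitative $L_c$ follows from Bohr's theorem on relatively dense $\varepsilon$-almost-periods. I expect the equidistribution/zero-mean step to be the main obstacle, since it is what rules out destructive interference among the frequencies; Steps 1 and 2 (the cross-term cancellations and the block-diagonal computation) are routine.
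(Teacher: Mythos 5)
Your Step 1 reproduces the paper's reduction (the paper additionally allows a nonzero mean \(\mu\) and shows the \(\mu^2\) cross-terms cancel, but the resulting condition \(\sum_i w_i\cos(\theta_i m_i)\ge 0\) with \(w_i=2\sigma^2\) is the same), and your Step 2 is essentially the content of the paper's Lemma~\ref{lem:neg_prob_monotone}. The divergence is Step 3, and that is where there is a genuine gap. Your Bohr-mean/equidistribution argument is run over all of \(\mathbb{Z}^3\), but in this setting only the temporal offset grows with \(L\): the spatial offsets are capped by the frame resolution, \(\Delta x\in\{0,\dots,H\}\), \(\Delta y\in\{0,\dots,W\}\) (this is explicit in Lemma~\ref{lemma:semantic_pref_condition} and in the constraint line of the paper's proof). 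The violating point \(\mathbf{m}^{*}\) your argument produces may have arbitrarily large spatial coordinates, and setting \(L_c=1+\max(|\Delta t^{*}|,|\Delta x^{*}|,|\Delta y^{*}|)\) does not make it realizable --- a longer video does not enlarge a frame. Worse, once restricted to the realizable domain the zero-mean argument collapses: averaging over \(\Delta t\) with \((\Delta x,\Delta y)\) confined to the fixed box, the limiting mean of \(g\) equals the spatial contribution, which is generically positive; e.g.\ at \(\Delta x=\Delta y=0\) it is \(\sum_{i\in i_x\cup i_y}w_i>0\), and with equal weights and an M-RoPE-style split (16 temporal vs.\ 48 spatial pairs) the temporal sum is bounded below by \(-2\sigma^2|i_t|\), so \(g>0\) for \emph{every} \(\Delta t\) at that spatial offset. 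Nonnegativity on the realizable set is therefore perfectly compatible with your mean computation; to conclude you would need the quantitative statement that the temporal sum can dip below the minimum spatial contribution attainable with \(\Delta x\le H,\ \Delta y\le W\), which the proposal never establishes and which depends on the allocation, \(H\), and \(W\). A related defect: since your violation is found anywhere on \(\mathbb{Z}^3\), it could be driven entirely by the spatial terms, in which case it says nothing about temporal frequencies or about a critical \emph{length}.

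By contrast, the paper constructs the violation inside the temporal block itself: using Lemma~\ref{lem:neg_prob_monotone} it argues the lowest frequencies are the most favorable case, sets the spatial sums aside (under an explicit co-occurrence assumption that they remain nonnegative), collapses the temporal block to the single smallest frequency \(\theta_{\min}\), and exhibits a concrete \(\Delta t\) with \(\cos(\Delta t\,\theta_{\min})<0\) as soon as \(L>L_c=\pi/(2\theta_{\min})+1\). Because that counterexample lives purely in \(\Delta t\), increasing \(L\) genuinely makes it realizable, and the critical length is explicit --- two things your non-constructive route does not deliver. Your almost-periodicity idea is attractive in that it is strategy-agnostic and handles interference among several distinct temporal frequencies (which the paper sidesteps by the single-\(\theta_{\min}\) simplification), but to repair it you would have to run the equidistribution on the temporal subtorus alone, with spatial offsets fixed in their bounded range, and then show the temporal infimum beats the minimal realizable spatial contribution --- exactly the quantitative comparison currently missing.
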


\begin{tcolorbox}[
  colback=myblue,   
  colframe=black,   
  arc=1mm,                 
  boxrule=1pt,             
  left=2mm, right=2mm, top=1mm, bottom=0.8mm,
  enhanced,
  width=\linewidth,
]
\textbf{Conclusion 2.} There exist different frequency allocation strategies to extend vanilla
RoPE to multimodal RoPE. However, we prove that none of these strategies can reliably
maintain the semantic preference property over a sufficiently long context.
\end{tcolorbox}

\subsection{How to Assign Positional Index for Multimodal Inputs?}

Currently, most VLMs \cite{liu2023visual, wang2024qwen2, chen2024internvl, ye2024mplug, cheng2024videollama, maaz2024video} adopt the same temporal stride for video frames and text tokens, as shown in Figure~\ref{fig:1}. However, this approach overlooks the inherent difference in information densities between text and visual tokens. To address this issue, VideoRoPE \cite{wei2025videorope} applies a fixed scaling factor (implemented as \(2\)) to adjust the temporal indices of visual tokens, achieving better empirical performance. However, this rigid scaling approach lacks the flexibility needed for diverse real-world videos, which naturally vary in pace and information density. A more ideal approach would incorporate both temporal compression and expansion capabilities, allowing the model to learn multi-scale temporal relationships, thereby enabling more robust temporal modeling.

\begin{tcolorbox}[
  colback=myblue,  
  colframe=black,   
  arc=1mm,        
  boxrule=1pt,              
  left=2mm, right=2mm, top=1mm, bottom=0.8mm,
  enhanced,
  width=\linewidth,
]
\textbf{Conclusion 3.} Temporal index scaling of visual tokens is crucial for balancing multimodal information, yet current methods lack flexibility and bidirectionality.
\end{tcolorbox}

\section{HoPE: Hybrid of Position Embedding for Long Context VLMs}
\label{sec:method}

To address the above challenges, we propose HoPE, a \textbf{H}ybrid \textbf{o}f \textbf{P}osition \textbf{E}mbedding designed to improve the long-context capability of VLMs. As illustrated in Figure~\ref{fig:1} and Figure~\ref{fig:all_freq}, HoPE first introduces a hybrid frequency allocation (HFA) strategy to better preserve the semantic preference property (Definition~\ref{def:semantic_pref}) in long-context modeling. Under this strategy, spatial information will be encoded with higher frequencies to capture local semantics, while the lowest frequencies will be set to zero (as in NoPE \cite{haviv2022transformer}) to facilitate long-range semantic modeling. Second, HoPE develops a dynamic temporal scaling (DTS) mechanism to enhance VLMs' robustness to various video speeds and enable flexible inference under diverse context lengths. We will detail these strategies as follows:

\subsection{Hybrid Frequency Allocation Strategy}

To extend vanilla RoPE to multimodal scenarios, a common approach is to allocate different frequencies to encode different positional components \((t,x,y)\). For example, M-RoPE \cite{wang2024qwen2} assigns the highest frequencies for temporal modeling and lower frequencies for spatial encoding. In contrast, VideoRoPE \cite{wei2025videorope} allocates the lowest frequencies for temporal modeling, achieving better empirical results. However, in Theorem~\ref{theorem:semantic_preference}, we theoretically prove that, despite using lower frequencies being more ideal for semantic modeling, none of these frequency allocation strategies can maintain the ideal semantic preference property (Definition~\ref{def:semantic_pref}) over extended contexts.

To provide a stronger theoretical guarantee for the semantic preference property, we propose a hybrid frequency allocation strategy. As shown in Figure~\ref{fig:1}, we encode spatial information \((x,y)\) with high frequencies, as high frequencies are more sensitive to positional differences and thereby better at capturing local semantics \cite{wei2025videorope, barbero2025round}. Following existing work \cite{wei2025videorope}, \(x\) and \(y\) are encoded in an interleaved manner to prevent biased spatial encoding. More importantly, unlike existing methods \cite{su2024roformer, wei2025videorope, wang2024qwen2}, we directly set the lowest frequencies to zero (as in NoPE \cite{haviv2022transformer}) to provide a stronger guarantee for the semantic preference property (Definition~\ref{def:semantic_pref}), as shown in Figure~\ref{fig:all_freq}. Specifically, for \(d=128\), we interleave \(x\) and \(y\) positions in the first \(96\) dimensions of the rotation matrix and set the frequencies in the remaining \(32\) dimensions to zero, which corresponds to an identity matrix:

\[
    \tiny
    R_{\Delta x, \Delta y} =  \operatorname{diag}(
    \setlength{\arraycolsep}{0.5pt} 
 \begin{pmatrix}
    \cos\theta_0\Delta x & -\sin\theta_0\Delta x  & 0 & 0 & \cdots & 0 & 0 & 0 & 0\\
    \sin\theta_0\Delta x  &  \cos\theta_0\Delta x  & 0 & 0 & \cdots & 0 & 0 & 0 & 0\\
    0 & 0 & \cos\theta_1\Delta y & -\sin\theta_1\Delta y & \cdots & 0 & 0 & 0 & 0 \\
    
    0 & 0 & \sin\theta_1\Delta y & \cos\theta_1\Delta y & \cdots & 0 & 0 & 0 & 0\\
    
    \vdots & \vdots & \vdots & \vdots & \ddots & \vdots & \vdots & \vdots & \vdots\\
    
    0 & 0 & 0 & 0 & \dots & \cos\theta_{46}\Delta x & -\sin\theta_{46}\Delta x & 0 & 0 \\
    
    0 & 0 & 0 & 0 & \dots & \sin\theta_{46}\Delta x & \cos\theta_{46}\Delta x & 0 & 0 \\
    
     0 & 0 & 0 & 0 & \dots & 0 & 0&  \cos\theta_{47}\Delta y & -\sin\theta_{47}\Delta y\\
     
    0 & 0 & 0 & 0 & \dots &  0 & 0 &  \sin\theta_{47}\Delta y & \cos\theta_{47}\Delta y\\

     \end{pmatrix}
     ,
     I_{32})
\]

We now provide a theoretical analysis of how this hybrid strategy helps the attention mechanism to capture long-range semantic similarities. Building on Definition~\ref{def:semantic_pref} and Theorem~\ref{theorem:semantic_preference}, we first formalize the condition under which semantic preference is preserved in multimodal RoPE.

\begin{figure}[t]
    \centering
    \begin{subfigure}[b]{0.32\textwidth}
        \centering
        \includegraphics[width=\textwidth]{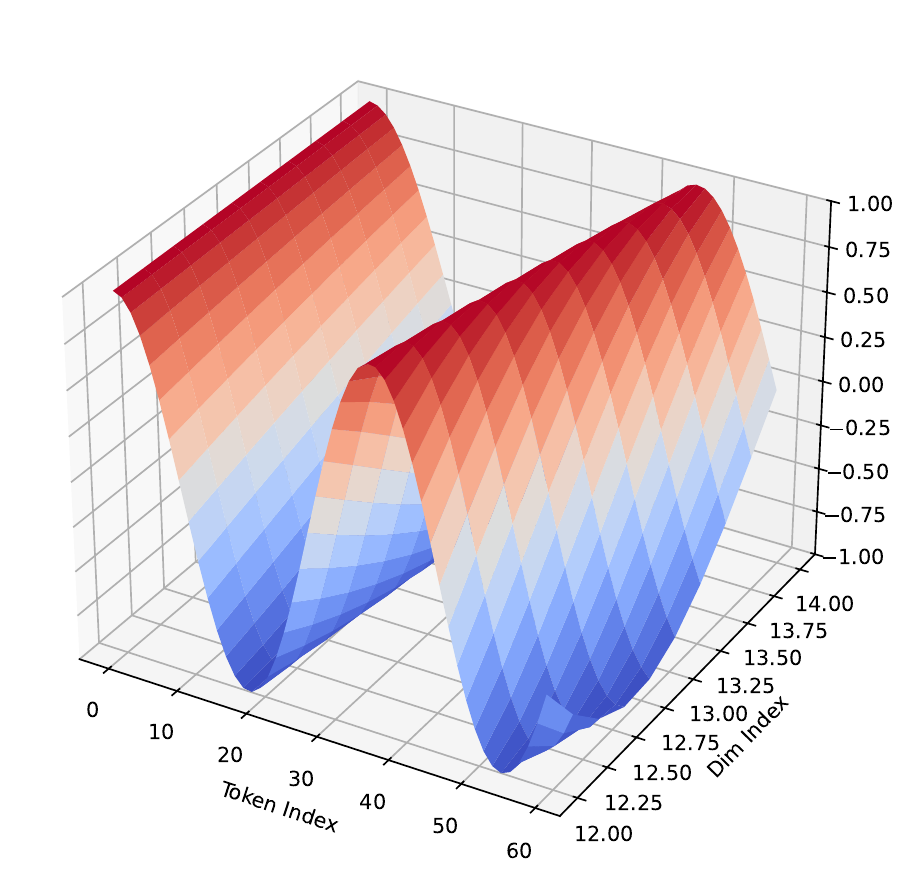}
        \caption{High frequencies for temporal modeling in M-RoPE.}
        \label{fig:m_rope}
    \end{subfigure}
    \hfill
    \begin{subfigure}[b]{0.32\textwidth}
        \centering
        \includegraphics[width=\textwidth]{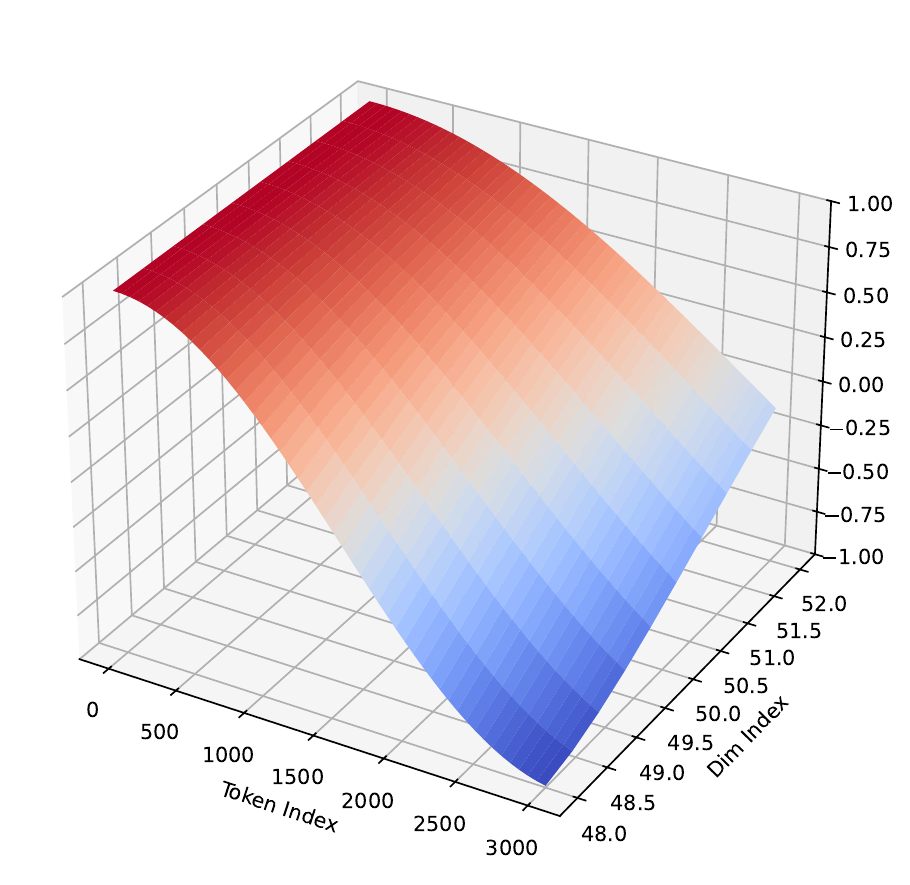}
        \caption{Low frequencies for temporal modeling in VideoRoPE.}
        \label{fig:video_rope}
    \end{subfigure}
    \hfill
    \begin{subfigure}[b]{0.32\textwidth}
        \centering
        \includegraphics[width=\textwidth]{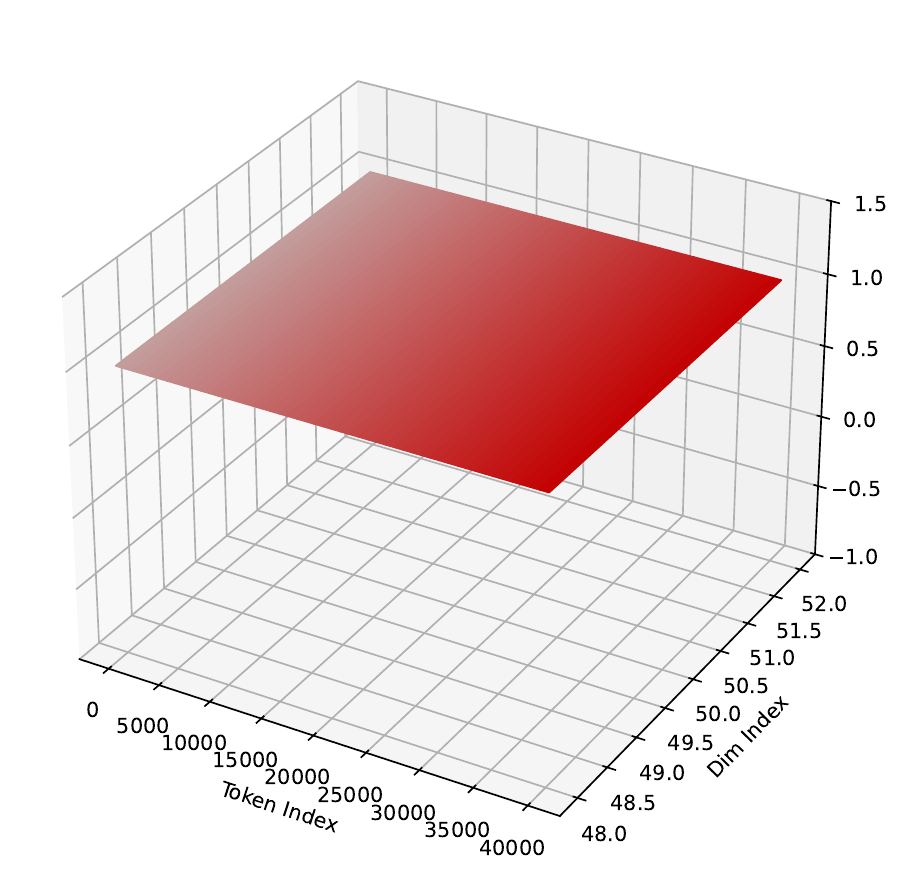}
        \caption{Zero frequencies for temporal modeling in HoPE (ours).}
        \label{fig:hope}
    \end{subfigure}
    \caption{\textbf{Multimodal RoPEs use different frequencies for temporal modeling.} M-RoPE uses the \textit{highest }frequencies, which are suboptimal for long-context modeling. VideoRoPE utilizes the \textit{lowest} frequencies for more stable semantic modeling. Our HoPE, employing \textit{zero} frequencies for temporal modeling, establishes the upper bound of semantic modeling capabilities across all strategies.}
    \label{fig:all_freq}
\vspace{-1em}
\end{figure}

In particular, Lemma~\ref{lemma:semantic_pref_condition} establishes a clear theoretical criterion for maintaining semantic preference with multimodal RoPE. It directly follows from our analysis in Theorem~\ref{theorem:semantic_preference} and Appendix~\ref{app:semantic_pref}, providing the theoretical foundation for our proposed method.

\begin{tcolorbox}[
  colback=blue!5!white,  
  colframe=black,  
  arc=0.8mm,                 
  boxrule=1pt,         
  left=2mm, right=2mm, top=1mm, bottom=1mm,
  enhanced,
  width=\linewidth,
]
\begin{lemma}[Necessary Condition for Semantic Preference]
For a multimodal RoPE with rotation matrix $\mathbf{R}_{t,x,y} = \operatorname{diag}(\mathbf{R}_t, \mathbf{R}_x,\mathbf{R}_y)$, the semantic preference property (Definition~\ref{def:semantic_pref}) holds if, for all possible relative distances,
\begin{equation*}
\sum_{i \in i_t}2\sigma^2\mathrm{cos}(\Delta t \cdot \theta_i)+\sum_{i \in i_x}2\sigma^2\mathrm{cos}(\Delta x \cdot \theta_i)+\sum_{i \in i_y}2\sigma^2\mathrm{cos}(\Delta y \cdot \theta_i) \geq 0,
\label{eq:sem_pref}
\end{equation*}
where \(\sigma^2\) is the variance of each component in the query/key vector, \(i_t, i_x,i_y\) are dimensions allocated to \(t,x,y\), and $\Delta t\in\{0,1,\dots,L-1\}$, $\Delta x\in\{0,1,\dots,H\}$, $\Delta y\in\{0,1,\dots,W\}$.
\label{lemma:semantic_pref_condition}
\end{lemma}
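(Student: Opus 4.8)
The plan is to evaluate the expectation in Definition~\ref{def:semantic_pref} in closed form under the probabilistic model already used in the proof of Theorem~\ref{theorem:semantic_preference}: the query $\mathbf{q}$ and the unrelated key $\mathbf{k}$ are independent with i.i.d.\ zero-mean coordinates of common variance $\sigma^2$, and the ``semantically similar'' perturbation $\delta$ is zero-mean and uncorrelated with $\mathbf{q}$. Abbreviating $\mathbf{R}=\mathbf{R}_{\Delta t\Delta x\Delta y}$ and substituting $\mathbf{k}'=\mathbf{q}+\delta$, linearity of expectation splits the quantity in Definition~\ref{def:semantic_pref} into
$\mathbb{E}[\mathbf{q}\mathbf{R}\mathbf{q}^{\top}]+\mathbb{E}[\mathbf{q}\mathbf{R}\delta^{\top}]-\mathbb{E}[\mathbf{q}\mathbf{R}\mathbf{k}^{\top}]$, so the whole argument reduces to evaluating these three terms.

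The cross terms vanish: $\mathbb{E}[\mathbf{q}\mathbf{R}\delta^{\top}]=\sum_{i,j}R_{ij}\,\mathbb{E}[q_i]\,\mathbb{E}[\delta_j]=0$ by independence and zero mean, and similarly $\mathbb{E}[\mathbf{q}\mathbf{R}\mathbf{k}^{\top}]=0$. For the remaining term, $\mathbf{q}\mathbf{R}\mathbf{q}^{\top}=\sum_{i,j}R_{ij}q_iq_j$ and $\mathbb{E}[q_iq_j]=\sigma^2$ if $i=j$ and $0$ otherwise, so $\mathbb{E}[\mathbf{q}\mathbf{R}\mathbf{q}^{\top}]=\sigma^2\operatorname{tr}(\mathbf{R})$; hence the expectation in Definition~\ref{def:semantic_pref} equals exactly $\sigma^2\operatorname{tr}(\mathbf{R}_{\Delta t\Delta x\Delta y})$. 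It then remains to compute the trace. Since $\mathbf{R}_{\Delta t\Delta x\Delta y}=\operatorname{diag}(\mathbf{R}_t,\mathbf{R}_x,\mathbf{R}_y)$ is block diagonal with $2\times2$ rotation blocks through angle $\theta_i$ times the relevant relative distance, each block for index $i$ contributes $2\cos(\theta_i\cdot\Delta)$ with $\Delta=\Delta t,\Delta x,$ or $\Delta y$ according to whether $i\in i_t,i_x,$ or $i_y$ (this is agnostic to which frequencies are used, so zeroed dimensions are simply the $\theta_i=0$ case and contribute $+2$ each, matching the interleaved/zeroed layout in the displayed HoPE matrix). Summing the block traces gives $\operatorname{tr}(\mathbf{R}_{\Delta t\Delta x\Delta y})=\sum_{i\in i_t}2\cos(\Delta t\,\theta_i)+\sum_{i\in i_x}2\cos(\Delta x\,\theta_i)+\sum_{i\in i_y}2\cos(\Delta y\,\theta_i)$, and multiplying by $\sigma^2$ yields precisely the left-hand side of Equation~\ref{eq:sem_pref}. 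Requiring this to be non-negative for every admissible relative distance $\Delta t\in\{0,\dots,L-1\}$, $\Delta x\in\{0,\dots,H\}$, $\Delta y\in\{0,\dots,W\}$ is therefore exactly the stated condition, and since the expectation equals this sum, the condition is in fact equivalent to the semantic preference property holding for all relative distances, which in particular proves the ``if'' direction asserted in the lemma.

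The main obstacle I anticipate is not the algebra but making the probabilistic hypotheses precise and explicit, since Definition~\ref{def:semantic_pref} only formalizes ``semantically similar'' by ``$\delta$ is zero-mean'': for $\mathbb{E}[\mathbf{q}\mathbf{R}\delta^{\top}]$ to vanish one additionally needs $\delta$ uncorrelated with $\mathbf{q}$, and for $\mathbb{E}[\mathbf{q}\mathbf{R}\mathbf{q}^{\top}]=\sigma^2\operatorname{tr}(\mathbf{R})$ one needs the coordinates of $\mathbf{q}$ to be uncorrelated with a common variance $\sigma^2$. I would state these as the working assumptions inherited from the proof of Theorem~\ref{theorem:semantic_preference}, and remark that under them the displayed inequality is necessary and sufficient (hence calling it a ``necessary condition'' is justified). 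A secondary point requiring care is the quantifier ``for all possible relative distances'': one should spell out that the ranges of $\Delta t,\Delta x,\Delta y$ are those induced by the sequence length and frame resolution, so that as $L$ grows the cosine sum over $i_t$ can become negative for some $\Delta t$, which is exactly the failure mode identified in Theorem~\ref{theorem:semantic_preference} and what motivates setting the corresponding $\theta_i$ to zero in HoPE.
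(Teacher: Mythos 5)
Your proposal is correct and follows essentially the same route as the paper: the paper's justification of this lemma is exactly the expectation computation embedded in the proof of Theorem~\ref{theorem:semantic_preference} (Appendix~\ref{app:semantic_pref}), which reduces \(\mathbb{E}[\mathbf{q}\mathbf{R}\mathbf{k}'^{\top}-\mathbf{q}\mathbf{R}\mathbf{k}^{\top}]\) to \(\sigma^2\) times the trace of the block-diagonal rotation matrix, i.e.\ the displayed cosine sum. The only cosmetic difference is that the paper carries i.i.d.\ components with a general mean \(\mu\) and lets the \(\mu^2\) terms cancel between \(\mathbb{E}[\mathbf{q}\mathbf{R}\mathbf{q}^{\top}]\) and \(\mathbb{E}[\mathbf{q}\mathbf{R}\mathbf{k}^{\top}]\), whereas you assume zero mean so those cross terms vanish outright; the conclusion and the assumptions you flag (zero-mean \(\delta\) uncorrelated with \(\mathbf{q}\), common variance \(\sigma^2\)) match the paper's.
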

\end{tcolorbox}

Based on this Lemma, we now prove how our hybrid frequency allocation strategy provides stronger guarantees for the semantic preference property over extended contexts. Specifically, HFA set \(\theta_i = 0\) for all \(i \in i_t\). Hence, the temporal terms in Lemma~\ref{lemma:semantic_pref_condition} reduce to \(\sum_{i \in i_t} 2\sigma^2 \cdot 1\), noting that \(\sum_{i \in i_t} 2\sigma^2 \cdot 1 \geq \sum_{i \in i_t} 2\sigma^2 \cos(\Delta t \cdot \theta_i)\) holds for any choice of temporal frequencies \(\theta_i\). Adding the identical spatial terms on both sides, we obtain:

\begin{equation}
\begin{aligned}
&\sum_{i \in i_t} 2\sigma^2 \cdot 1 + \sum_{i \in i_x} 2\sigma^2 \cos(\Delta x \cdot \theta_i) + \sum_{i \in i_y} 2\sigma^2 \cos(\Delta y \cdot \theta_i) \\
&\geq \sum_{i \in i_t} 2\sigma^2 \cos(\Delta t \cdot \theta_i) + \sum_{i \in i_x} 2\sigma^2 \cos(\Delta x \cdot \theta_i) + \sum_{i \in i_y} 2\sigma^2 \cos(\Delta y \cdot \theta_i).
\end{aligned}
\end{equation}

This shows that our HFA strategy, by setting the lowest frequencies to zero, dominates any other choice of temporal frequencies and provides a stronger guarantee for preserving the semantic preference property under long-context scenarios, as in Theorem~\ref{theorem:hybrid_frequency}.

\begin{theorem}
For multimodal position embeddings with dimensions allocated across temporal (\(i_t\)), and spatial components (\(i_x,i_y\)), setting \(\theta_i = 0\) for all temporal dimensions \(i \in i_t\) maximizes the semantic preference guarantee in Definition~\ref{def:semantic_pref}, compared to any alternative frequency allocation strategy, particularly under extended context lengths.
\label{theorem:hybrid_frequency}
\end{theorem}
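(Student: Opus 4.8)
The plan is to reduce Theorem~\ref{theorem:hybrid_frequency} to the elementary bound $\cos(\cdot)\le 1$ applied to the sufficient condition of Lemma~\ref{lemma:semantic_pref_condition}. First I would pin down the notion of ``guarantee'': for a given frequency allocation, define the \emph{semantic margin}
\[
m \;=\; \inf_{\Delta t,\Delta x,\Delta y}\Big[\textstyle\sum_{i\in i_t}2\sigma^2\cos(\Delta t\cdot\theta_i)+\sum_{i\in i_x}2\sigma^2\cos(\Delta x\cdot\theta_i)+\sum_{i\in i_y}2\sigma^2\cos(\Delta y\cdot\theta_i)\Big],
\]
the infimum being over all admissible $\Delta t\in\{0,\dots,L-1\}$, $\Delta x\in\{0,\dots,H\}$, $\Delta y\in\{0,\dots,W\}$. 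By Lemma~\ref{lemma:semantic_pref_condition}, $m\ge 0$ certifies the semantic preference property of Definition~\ref{def:semantic_pref}, so ``maximizing the guarantee'' means maximizing $m$ over the temporal frequencies while holding the interleaved spatial frequencies fixed (these are shared with prior work, so the comparison is fair).

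Second I would separate the temporal and spatial contributions. Since $i_x,i_y$ and their frequencies are common to every strategy under comparison, write the summand as $T(\Delta t)+S(\Delta x,\Delta y)$ with $T(\Delta t)=\sum_{i\in i_t}2\sigma^2\cos(\Delta t\cdot\theta_i)$ the only strategy-dependent part. The core step is the pointwise inequality: for every $\Delta t$ and every choice $\{\theta_i\}_{i\in i_t}$,
\[
T(\Delta t)\;\le\;\textstyle\sum_{i\in i_t}2\sigma^2,
\]
with the right-hand side attained for \emph{all} $\Delta t$ simultaneously precisely when $\theta_i=0$ for every $i\in i_t$ (any nonzero RoPE frequency $\theta_i\in(0,1]$ already gives $\cos(\theta_i)<1$ at $\Delta t=1$, since $1<\pi$). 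Hence the zero-frequency choice dominates every alternative pointwise in $(\Delta t,\Delta x,\Delta y)$, so it dominates the infimum $m$; this is exactly the asserted optimality, and it reuses verbatim the one-line chain of inequalities already displayed in the text preceding the theorem.

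Third I would make the ``particularly for extended context lengths'' clause quantitative by invoking Theorem~\ref{theorem:semantic_preference}: with $\theta_i=0$ the temporal contribution is pinned at $\sum_{i\in i_t}2\sigma^2$ regardless of $L$, so $m$ depends on $L$ only through the bounded-range spatial terms; whereas for any nonzero temporal frequency there is a critical length $L_c$ beyond which the rotations $\Delta t\cdot\theta_i \bmod 2\pi$ become dense enough that $\inf_{\Delta t\le L-1}T(\Delta t)$ drops far enough to force $m<0$, i.e.\ the sufficient condition, and with it the certified guarantee, is lost. Thus HoPE is not merely weakly optimal but, up to the trivial $\Delta t=0$ degeneracy, the unique allocation whose guarantee survives arbitrarily long contexts. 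I expect the main obstacle to be formalization rather than algebra: one must argue that the worst-case margin $m$ (hence pointwise domination of the integrand) is the operationally correct meaning of ``stronger guarantee,'' and one should flag the caveat that Eq.~\ref{eq:sem_pref} is only sufficient, so the theorem is optimality of a certifiable robustness bound on semantic preservation, not of the attention scores themselves.
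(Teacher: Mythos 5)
Your proposal is correct and takes essentially the same route as the paper: the paper's justification of this theorem is precisely the pointwise bound $\cos(\Delta t\cdot\theta_i)\le 1$ applied to the condition of Lemma~\ref{lemma:semantic_pref_condition}, so that zero temporal frequencies dominate any alternative allocation term by term while the spatial terms are held fixed. Your additional formalization (the worst-case margin $m$, the strictness remark at $\Delta t=1$, and the appeal to Theorem~\ref{theorem:semantic_preference} for the extended-context clause, together with the caveat that Equation~\ref{eq:sem_pref} is only a sufficient condition) merely makes explicit what the paper states informally.
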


Another interesting finding is that, if we set \(|i_t|=d/4,|i_x|=|i_y|=d/8\) and \(\theta_i=0, i \in i_t\), for any context length \(t\) and spatial size \(x,y\), semantic preference property invariably holds, as Lemma~\ref{lemma:semantic_pref_condition} reduces to \(\sum_{i=0}^{d/8-1}2\sigma^2(2+\mathrm{cos}(\Delta x \cdot \theta_{2i})+\mathrm{cos}(\Delta y \cdot \theta_{2i+1})) \geq 0.\) However, the empirical results of this approach are inferior to our proposed HoPE, probably due to the decreased number of frequencies for spatial modeling. More discussions are provided in Appendix~\ref{app:ideal_sem_pref}.

\subsection{Dynamic Temporal Scaling Mechanism}
Considering the distinct information densities of text and visual tokens, HoPE introduces a dynamic temporal scaling mechanism that adjusts the temporal strides of visual inputs. Specifically, we first define a set of scaling factors, e.g., \(\Gamma =\{0.5, 0.75, 1, 1.25,1.5\}\), which includes both stretching (\(\gamma>1\)) and compressing \((\gamma<1\)) operations. During training, the scaling factor \(\gamma\) is randomly selected from \(\Gamma\) and applied to each video. This allows the model to learn temporal relationships at multiple scales, making it more robust to variations in video speed, which are common in real-world scenarios. Consider a multimodal input \((text, video, text)\) of length \(L_t\), \(L_v\), and \(L_e\), respectively. The position indices \((t,x,y)\) for each token with our dynamic scaling factor \({\color{blue}\gamma}\) are:

\begin{equation}
    \small
    (t, x, y) = 
    \begin{cases} 
    (l, l, l), &  0 \leq l < L_t \\\\
    \begin{pmatrix}
    L_t + {\color{blue}\gamma} (l - L_t), \\
    L_t + {\color{blue}\gamma} (l - L_t) + w - \frac{W}{2}, \\
    L_t + {\color{blue}\gamma} (l - L_t) + h - \frac{H}{2}
    \end{pmatrix}, &  L_t \leq l < L_t + L_v \\\\
    \begin{pmatrix}
    ({\color{blue}\gamma} - 1) L_v + l, \\
    ({\color{blue}\gamma} - 1) L_v + l, \\
    ({\color{blue}\gamma} - 1) L_v + l
    \end{pmatrix}, & L_t + L_v \leq l < L_t + L_v + L_e
    \end{cases}.
    \label{eq:dts}
\end{equation}

Note that for visual tokens ($L_t \leq l < L_t + L_v$), $l - L_t$ indicates the distance of the current frame from the start frame. During inference, scaling factors can be flexibly selected from the set to accommodate videos of different lengths. It is worth noting that unlike existing methods, which do not consider temporal scaling for visual tokens \cite{liu2023visual, wang2024qwen2, ye2024mplug, lu2024deepseek} or just apply a fixed and unidirectional scaling factor for both training and testing \cite{wei2025videorope}, our methods not only help the model learn temporal relationships at multiple scales, but also offer flexibility during inference to accommodate various context lengths.

\section{Experiment}
In this section, we evaluate the performance of HoPE on four video benchmarks across long video understanding and long video retrieval tasks, aiming to validate its effectiveness in multimodal long-context modeling. Additionally, we conduct ablation studies to investigate the individual contribution of each strategy to overall performance and the interplay between task type, context length, and scaling factor selection.

\subsection{Experimental Setups}
\label{sec:exp_setup}

\textbf{Implementation Details.} We utilize Qwen2-1.5B and Qwen2-7B \cite{yang2024qwen2} as the backbone models. By integrating these models with vision encoders from Qwen2-VL-2B/7B-Instruct \cite{wang2024qwen2}, we obtain Qwen2-2B/7B-Video, respectively. During training, we adopt a batch size of 128, a learning rate of 1e-5(2B)/2e-5(7B) with a cosine scheduler. Following the instruction tuning settings in Qwen2-VL \cite{wang2024qwen2}, we set the maximum video frames to 128 and the video sampling rate to 2. The training context length is set to 8k, with the entire training process taking approximately 304 GPU hours on machines equipped with H800-80GB GPUs. During evaluation, the minimum tokens per frame are set to 144.

\textbf{Training Data.} We train the models on a subset of LLaVA-Video-178k \cite{zhang2024video}, which consists of 178k videos ranging from 0 to 3 minutes and 5M instruction samples, including captions, free-form, and multiple-choice question answering. Our selected subset includes 30k videos with durations under 2 minutes and 3k videos with durations between 2 and 3 minutes, resulting in roughly 300k pairs. 

\textbf{Baselines.} We compare HoPE with the following RoPE variants: 1) vanilla RoPE \cite{su2024roformer}, the standard approach in long-context LLMs, 2) M-RoPE \cite{wang2024qwen2}, a famous RoPE extension in Qwen2-VL for multimodal inputs, 3) VideoRoPE \cite{wei2025videorope}, a specialized RoPE variant designed for video-related tasks.

\textbf{Evaluation Benchmarks.} We evaluate HoPE across four video benchmarks for long video understanding and long video retrieval tasks. For long video understanding, we utilize LongVideoBench \cite{wu2024longvideobench}, Video-MME \cite{fu2024video}, and MLVU \cite{zhou2024mlvu}, covering videos ranging from a few seconds to 2 hours. For long video retrieval, we employ V-NIAH (Visual Needle-In-A-Haystack) \cite{zhang2024long}. In this task, a "needle" image is randomly inserted into a "haystack" video, and the
VLM is required to answer a question specifically about the embedded "needle" image. Following the protocol in V-NIAH \cite{zhang2024long}, we utilize a haystack video with 1-hour duration (3,000 frames) and insert the needle image at 20\% depth intervals (e.g., a frame depth of 0\% would place the needle image at the very beginning of the video). For more detailed benchmark descriptions, please refer to Appendix~\ref{app:benchmark}.

\begin{table}[t]
\caption{\textbf{Performance comparison on long video understanding benchmarks.} The training context length of all methods is set to 8k, and we report the performance on 8k, 16k, 32k, and 64k to evaluate length generalization. The best results are \textbf{bold}, while the second best results are \underline{underlined}.}
\vspace{0.8em}
\centering
\renewcommand{\arraystretch}{1.25}
\resizebox{\textwidth}{!}{
\begin{tabular}{l|cccc|cccc|cccc}
\toprule[1pt]
 & \multicolumn{4}{c|}{\textbf{MLVU}} & \multicolumn{4}{c|}{\textbf{LongVideoBench}} & \multicolumn{4}{c}{\textbf{Video-MME}} \\
\cmidrule(lr){2-5} \cmidrule(lr){6-9} \cmidrule(lr){10-13}
Method & 8k & 16k & 32k & 64k & 8k & 16k & 32k & 64k & 8k & 16k & 32k & 64k \\
\midrule
\rowcolor{gray!15}
\multicolumn{13}{c}{\rule[-5pt]{0pt}{16pt}\textit{ \hspace{2em} \textbf{Qwen2-2B-Video}}} \\
\midrule
Vanilla RoPE  & {\bf 55.10} & \underline{55.21} & 54.36 & 39.06 & 51.57 & 50.29 & 51.00 & 34.21 & 50.70 & \underline{51.48} & 51.44 & 20.31 \\
M-RoPE  
& 53.26 & 53.69 & \underline{54.73} & 40.63
& 50.81 & \underline{52.26} & 51.30 & \underline{44.74} & \underline{51.44} & 51.22 & \underline{51.52} & \underline{23.44} \\
VideoRoPE & 54.75 & 55.19 & 54.00 & \underline{42.19} & \underline{52.17} & 52.02 & \underline{51.31} & 36.84 & 50.89 & 50.52 & 50.56 & 15.63 \\
\textbf{HoPE (Ours)} & \underline{54.89} & {\bf 56.36} & {\bf 55.70} & \textbf{45.12} & {\bf 52.31} & \textbf{52.97} & \textbf{51.66} & \textbf{46.27} & {\bf 51.79} & {\bf 51.87} & {\bf 51.69} & \textbf{26.03} \\
\midrule
\rowcolor{gray!15}
\multicolumn{13}{c}{\rule[-5pt]{0pt}{16pt}\textit{ \hspace{2em} \textbf{Qwen2-7B-Video}}} \\
\midrule
Vanilla RoPE & 59.75  & 61.13 & 61.03 & 34.38 & 51.17 & 50.31 & 51.29 & 39.47 & 56.70 & 57.96 & 57.99 & 26.13 \\
M-RoPE  & 59.70 & 61.68 & 62.46 & \underline{46.88} & 52.27 & \underline{53.29} & 53.49 & \underline{50.00} & 56.81 & 57.77 & 58.37 & 23.43 \\
VideoRoPE & \underline{60.40}& \underline{61.82}  & \underline{62.51} & 45.31 & \underline{52.89} & 53.13 & \underline{53.82} & 47.37 & \underline{57.51} & \underline{59.00} & \underline{59.13} & \underline{26.52} \\
\textbf{HoPE (Ours)} & {\bf 61.09} & {\bf 63.48} & {\bf 63.85} & \textbf{50.01} & {\bf 54.11} & {\bf 55.09} & {\bf 55.34} & \textbf{51.22} & \textbf{57.74} & \textbf{59.33} & \textbf{59.44} & \textbf{27.34} \\
\bottomrule[1pt]
\end{tabular}
}
\label{tab:main_results}
\end{table}

\subsection{Results on Long Video Understanding}

In this section, we provide a comprehensive comparison of HoPE and different RoPE variants in long video understanding. From Table~\ref{tab:main_results}, we observe that: \textbf{(1)} HoPE consistently outperforms all baselines across nearly all benchmarks, context lengths, and backbone sizes. Specifically, under the 7B model scale and 32k context lengths, HoPE surpasses vanilla RoPE by 2.82, 4.05, and 1.45 on MLVU, LongVideoBench, and Video-MME, respectively. This confirms its effectiveness and generalizability in multimodal long-context modeling. \textbf{(2)} The effectiveness of HoPE scales with backbone size. For instance, when the size of the backbone LLM increases from 2B to 7B, HoPE's performance gain on LongVideoBench (32k) significantly increases from 0.66 to 4.05 compared to vanilla RoPE. Notably, the performance gap between different methods on the 2B scale is less significant, probably due to the limited capabilities of the backbone LLM. \textbf{(3)} For context lengths under 64k, performance on Video-MME drops substantially, while the impact on MLVU and LongVideoBench is less pronounced. This suggests that extrapolating to extreme context lengths (e.g., up to 8x) remains highly challenging.

\begin{figure}[t]
    \centering
    \includegraphics[width=1\textwidth]{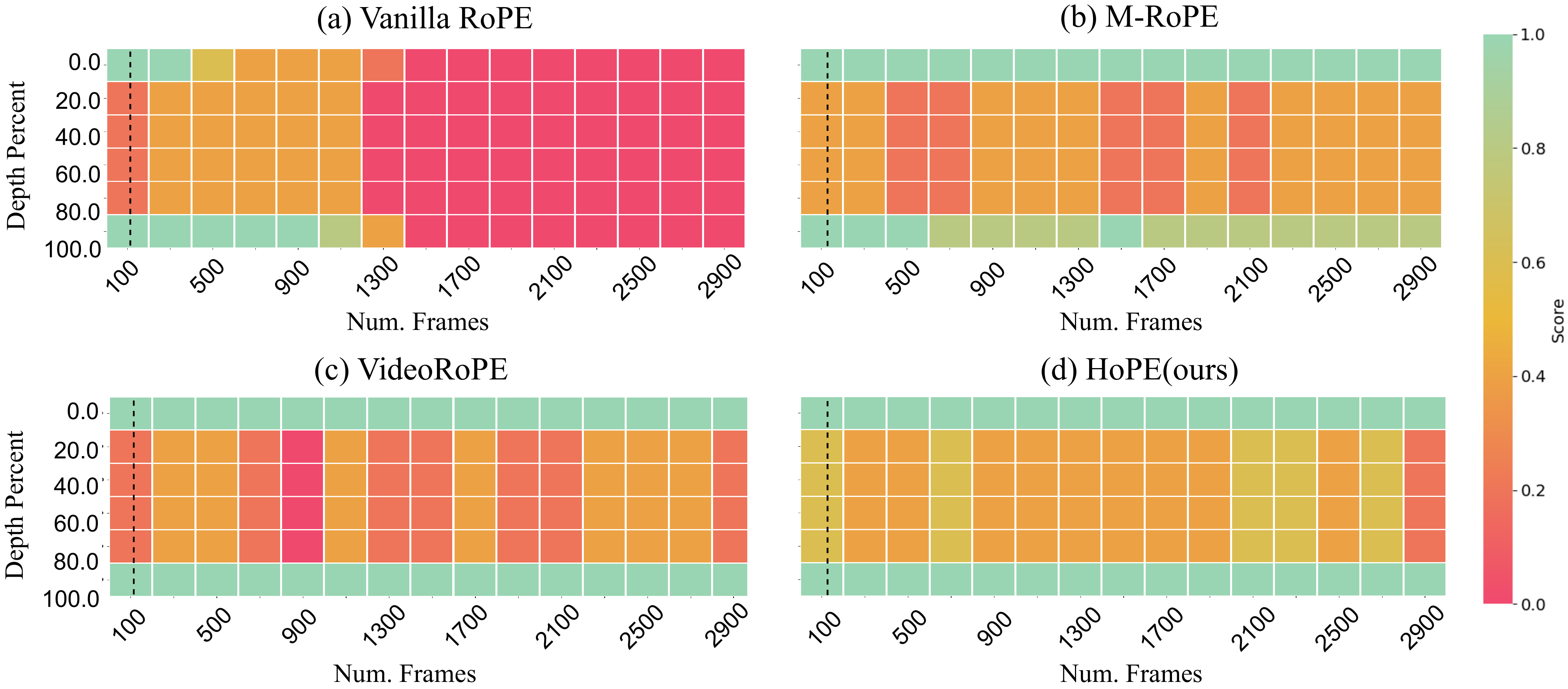}
    \caption{\textbf{Performance comparison on long video retrieval task (V-NIAH).} Here, each frame corresponds to 144 tokens. Cell colors indicate model accuracy (red: low, green: high), and the black dotted line marks the training context length (8k).}
    \label{fig:3}
\end{figure}

\subsection{Results on Long Video Retrieval}
\label{sec:exp_retrieval}

We evaluate HoPE against other RoPE variants on V-NIAH \cite{zhang2024long} to demonstrate the superiority of our method in long video retrieval, where VLMs are required to identify specific frames in a video to answer the question. Figure~\ref{fig:3} demonstrates that multimodal RoPEs significantly outperform vanilla RoPE, supporting our claim in Proposition~\ref{prop:1drope_for_video} that the flattening operation in vanilla RoPE hinders spatial-temporal modeling. Furthermore, HoPE achieves better extrapolation than M-RoPE and VideoRoPE, confirming its effectiveness in multimodal long-context modeling. Quantitative results in Table~\ref{tab:quant_VNIAH} show that HoPE surpasses the best baseline by a significant margin of 22.23\%.

\subsection{Analysis}

In this section, we first conduct ablation studies to analyze the effectiveness of each component in HoPE. We then present a comprehensive analysis exploring how different factors, including task type, context length, and the scaling factor of visual tokens, interact and impact model performance.

\textbf{Ablation Studies.} We conduct a series of ablation experiments to evaluate the impact of each component in HoPE and summarize the results in Table~\ref{tab:ablate}. According to the results, we observe that : \begin{wrapfigure}{r}{0.47\textwidth}
\centering
\begin{minipage}{0.47\textwidth}
\captionsetup{type=table}
\caption{\textbf{Ablation results on Video-MME from 8k to 64k.}
Here, HFA: hybrid frequency allocation, DTS: dynamic temporal scaling.}
\renewcommand{\arraystretch}{1.1}
\small
\setlength{\tabcolsep}{3pt}
\begin{tabular}{lcccc}
\toprule[1pt]
Method & 8k & 16k & 32k & 64k \\
\midrule
Vanilla RoPE & 56.70 & 57.96 & 57.99 & 26.13 \\
+ 3D structure & 56.81 & 57.77 & 58.37 & 23.43  \\
+ 3D + HFA & 57.66 & 59.19 & 59.31 &  26.98 \\
\rowcolor{myblue}
+ 3D + HFA + DTS & \textbf{57.74} & \textbf{59.33} & \textbf{59.44} & \textbf{27.34} \\
\bottomrule[1pt]
\end{tabular}
\label{tab:ablate}
\end{minipage}
\vspace{-1em}
\end{wrapfigure}(1) The 3D structure effectively improves the performance of vanilla RoPE in multimodal contexts, supporting our Proposition~\ref{prop:1drope_for_video}. (2) Based on the 3D structure, the hybrid frequency allocation (HFA) strategy further enhances long-range semantic modeling, achieving an average improvement of 1.69 across all context lengths. (3) The dynamic temporal scaling (DTS) mechanism facilitates VLMs' robustness to varying video speeds in real-world scenarios, yielding further performance gain. By combining the above strategies, our HoPE achieves the best overall performance across different context lengths in multimodal long-context modeling.

\begin{wraptable}{r}{0.55\textwidth}
\centering
\caption{\textbf{Ablation studies on test-time scaling factor selection.} We find that long video understanding generally benefits from larger scaling factors, while long video retrieval yields better results with smaller ones.}
\scriptsize
\begin{tabular}{cccccc}
\toprule
\multirow{2}{*}{\centering \textbf{Scaling Factor} \(\mathbf{\gamma}\)} & \multicolumn{4}{c}{\textbf{LongVideoBench}} & \multirow{2}{*}{\centering \textbf{V-NIAH}} \\
\cmidrule{2-5}
 & 8k & 16k & 32k & 64k \\
\midrule
0.50  & \textbf{54.48} & 54.29 & 54.36 & 52.63 & 60.89 \\
0.75  & 54.36 & 54.97 & 54.72 & 52.63 & \textbf{63.56}\\
1.00  & 54.11 & 54.48 & 54.97 & 52.63 & 62.67\\
1.25  & 54.11 & 54.84 & \textbf{55.70} & 52.63 & 62.67 \\
1.50  & 54.11 & \textbf{55.09} & 55.34 & 51.22 & 61.78 \\
\bottomrule
\end{tabular}
\label{tab:scale_ablate}
\end{wraptable}
\textbf{Impact of Test-time Scaling Factor Selection.} We conduct further experiments to investigate how different scaling factors \(\gamma\) in our dynamic temporal scaling mechanism impact the performance of video-related tasks. We summarize the results on V-NIAH and LongVideoBench in Table~\ref{tab:scale_ablate}. Our main observations are as follows: \textbf{(1)} \textit{Long video retrieval generally prefers smaller scaling factors.} As shown in Table~\ref{tab:scale_ablate}, when we utilize smaller scaling factors \(\gamma\) during inference, the performance on V-NIAH improves. We attribute this to the substantial length of 1-hour videos (3,000 frames), which far exceeds the training length (128 frames). In such cases, smaller scaling factors indirectly prevent the spatial position indices from becoming excessively large (see Equation~\ref{eq:dts}), thereby providing a better guarantee for the semantic preference property. Therefore, we set \(\gamma=0.75\) for long video retrieval. \textbf{(2)} \textit{Long video understanding generally benefits from larger scaling factors.} In contrast to retrieval, we find that long video understanding is relatively insensitive to the choice of scaling factor when the input context length is close to the training length. However, as the input length increases, employing larger scaling factors (\(\gamma>1\)) results in better performance. We hypothesize that while smaller scaling factors help preserve the semantic preference property, larger scaling factors are beneficial for maintaining spatial details (also see Equation~\ref{eq:dts}), which are crucial for complex understanding tasks. This introduces a natural tradeoff between semantic preference and spatial detail preservation. Compared to long video retrieval (3,000 frames, roughly 432k tokens), where extended temporal distances can significantly degrade semantic preference, in long video understanding tasks with context lengths of 16k–32k, the negative impact on semantic preference is relatively small. At the same time, the positive effect of larger scaling factors on capturing spatial details outweighs the semantic preference loss, making larger scaling factors overall more effective for complex video understanding. In our experiments, we set \(\gamma=1.5\) for long video understanding.

\section{Related Work}
\textbf{Position Embedding in LLMs.} Rotary Position Embedding (RoPE) \cite{su2024roformer} has become a common choice for position embedding in modern LLMs \cite{touvron2023llama, bai2023qwen, jiang2024mixtral, base_of_rope}. As discussed in Section~\ref{sec:preli}, RoPE achieves this success through rotating query and key vectors, encoding \textit{relative} position information through an \textit{absolute} positional encoding approach. Despite its success, several works have pointed out that No Position Embedding (NoPE) still works for decoder-only LLMs, arguing that the causal attention mechanism implicitly learns \textit{absolute} position information \cite{haviv2022transformer, kazemnejad2023impact, barbero2025round}. These works even suggest that NoPE outperforms RoPE in out-of-distribution (OOD) scenarios. However, this observation remains unexplored in multimodal settings, where positional encoding strategies may have different implications for cross-modal interactions. Based on Lemma~\ref{lemma:semantic_pref_condition}, we find that incorporating NoPE's zero frequency strategy indeed improves the length generalization of multimodal RoPE. 

\textbf{Multimodal Position Embedding in VLMs.} In VLMs \cite{liu2023visual, wang2024qwen2, chen2024internvl, ye2024mplug, lu2024deepseek}, images are first processed by vision encoders and then flattened into 1D tokens. Several early models \cite{liu2023visual, ye2024mplug, lu2024deepseek} rely on vanilla RoPE for positional encoding, which distorts spatial-temporal locality (see Section~\ref{sec:analysis}) and limits VLMs' long-context capability. Recently, Qwen2-VL \cite{wang2024qwen2} introduced M-RoPE, which extends 1D RoPE to multimodal settings by assigning distinct frequency ranges to different positional components. Specifically, M-RoPE allocates the \textit{highest} frequencies to the temporal component $t$, while distributing the lower frequencies sequentially to the spatial components $x$ and $y$. Conversely, VideoRoPE \cite{wei2025videorope} allocates the \textit{lowest} frequencies to $t$ to capture long-range dependencies, achieving stronger length generalization. However, these allocation strategies mainly rely on heuristics, lacking in-depth theoretical
analysis. In contrast, our work theoretically analyzes how different frequency allocation strategies impact the performance of multimodal RoPE. By zeroing out low frequencies for temporal modeling, our proposed HoPE provides the strongest theoretical guarantee for long-range semantic modeling. HoPE's strength is further enhanced by its dynamic temporal scaling of visual tokens, which enables robust temporal learning during training and flexible scaling during inference. By integrating these advantages, HoPE achieves state-of-the-art performance in long video understanding and retrieval tasks, making it well-suited for long context VLMs.

\section{Conclusion}
\label{sec:conclusion}
This paper theoretically analyzes the limitations of current multimodal RoPE variants. Our analysis reveals that: (1) vanilla RoPE inherently fails in spatial-temporal modeling; (2) keeping all frequencies in multimodal RoPE is unreliable in capturing long-range semantic similarities; (3) temporal scaling of lengthy visual tokens should include both compression and expansion to accommodate various video speeds. Consequently, we introduce HoPE, a hybrid of position embedding designed to enhance the long-context capabilities of VLMs. HoPE proposes a hybrid frequency allocation strategy to facilitate long-range semantic modeling, and a dynamic temporal scaling mechanism to enhance VLMs' robustness to varying video speeds in real-world scenarios. Experimental results on long video understanding and long video retrieval tasks demonstrate that HoPE consistently outperforms existing methods across diverse context lengths and backbone sizes, confirming its effectiveness.

\textbf{Limitations.} While HoPE’s performance gains scale from 2B to 7B backbones, our work does not use larger models or training data. We observe that the performance of all methods degrades significantly at 64k, though HoPE remains the most robust. While these resource-constrained evaluations are essential for uncovering genuine algorithmic benefits of multimodal RoPE, we note that training with more data, particularly long-context data, could further improve length generalization.

\bibliography{refs.bib}
\bibliographystyle{unsrt}







\newpage 
\appendix

\section{Proofs}
\label{app:proofs}
In this section, we provide detailed proofs for the theoretical statements presented in this paper.

\subsection{Vanilla RoPE Fails in Spatial-Temporal Structure}
\label{app:proof_1drope}

\textbf{Proposition~\ref{prop:1drope_for_video}.} 
\textit{Given any query \(\mathbf{q}\) at position \((t,x,y)\) and a relative distance of 1 in spatial or temporal dimensions, the flattening operation in 1D RoPE distorts the relative distance with a magnitude dependent on the frame resolution.}

\begin{proof}

Consider a video of shape $T \times H \times W$, where each token at position $(t,x,y)$ is flattened by
\[
f(t,x,y) = tHW + xW + y.
\]
Now consider two types of local neighbors:

1. \textbf{Spatial neighbors within the same frame}:

   Let \((t,x,y)\) and \((t,x+1,y)\) be adjacent in the spatial dimension. Then,
    \begin{equation}
        |f(t,x+1,y) - f(t,x,y)| = |((x+1)W + y) - (xW + y)| = W.
    \end{equation}
   Note that a relative distance of 1 in \(x\) becomes \(W\) after flattening, which grows linearly with the frame width.

2. \textbf{Temporal neighbors at the same spatial position}:
   Let \((t,x,y)\) and \((t+1,x,y)\) be adjacent in time. Then,
   \begin{equation}
       |f(t+1,x,y) - f(t,x,y)| = |(t+1)HW + xW + y - (tHW + xW + y)| = HW.
   \end{equation}
   For a 1-frame shift in time, the index difference becomes \(HW\), which grows with spatial resolution.

In both cases, spatially or temporally adjacent tokens are mapped to indices with significant differences. Since vanilla RoPE incorporates positional information based on these 1D index differences, such flattening leads to distorted spatial-temporal relationships.
\end{proof}

\subsection{Semantic Preference Property}
\label{app:semantic_pref}

We now prove that the frequency allocation strategies in current multimodal RoPEs are unreliable in capturing semantic similarities over extended contexts, as defined in Definition~\ref{def:semantic_pref}.

\textbf{Definition~\ref{def:semantic_pref}.} \textbf{(}Semantic Preference\textbf{).} For any query vector $\mathbf{q}$ and a semantically similar key vector $\mathbf{k}'$ that can be expressed as $\mathbf{k}' = \mathbf{q} + \delta$ where $\delta$ is a zero-mean perturbation, the attention score with RoPE should satisfy:
\begin{equation}
\mathbb{E}_{\mathbf{q},\mathbf{k},\delta}[\mathbf{q}\mathbf{R}_{\Delta t \Delta x \Delta y }\mathbf{k}'-\mathbf{q}\mathbf{R}_{\Delta t \Delta x \Delta y }\mathbf{k}]\ge 0,
\end{equation}
where $\mathbf{k}$ is the key vector of a semantically unrelated token. This preference should hold regardless of the relative distance (\(\Delta t, \Delta x, \Delta y\)) between query-key pairs.

Firstly, we use Lemma~\ref{lem:neg_prob_monotone} to show why using lower frequencies for temporal modeling is more ideal in multimodal RoPE. Intuitively, larger rotation angles (frequencies) are more likely to produce negative cosine similarity values between semantically related tokens under long-context scenarios.

\begin{lemma}\label{lem:neg_prob_monotone}
Let \(\Delta t\) be drawn uniformly from \(\{0,1,\dots,L-1\}\), and define
\[
  P_{\neg}(\theta)
  \;=\;
  \frac{1}{L}\,\bigl|\{\Delta : \cos(\theta\,\Delta t)<0\}\bigr|.
\]
Then for any \(L>1\):
\begin{enumerate}
  \item If \(0<\theta<\frac{\pi}{2(L-1)}\), then \(P_{\neg}(\theta)=0\).
  \item For \(\theta \ge \frac{\pi}{2(L-1)}\), \(P_{\neg}(\theta)\) is non-decreasing in \(\theta\).
  \item \(\displaystyle\lim_{\theta\to\infty}P_{\neg}(\theta) = \tfrac12.\)
\end{enumerate}
\end{lemma}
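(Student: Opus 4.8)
\textbf{Proof plan for Lemma~\ref{lem:neg_prob_monotone}.}
The plan is to analyze the set $S_{\neg}(\theta) = \{\Delta t \in \{0,\dots,L-1\} : \cos(\theta\,\Delta t) < 0\}$ directly, using the geometry of where cosine is negative. Recall $\cos(z) < 0$ precisely when $z \bmod 2\pi \in (\pi/2, 3\pi/2)$. So $\Delta t$ contributes to $S_{\neg}(\theta)$ iff $\theta\,\Delta t$ lands in one of the ``bad'' arcs $\bigcup_{j \ge 0}(\pi/2 + 2\pi j,\ 3\pi/2 + 2\pi j)$.

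For part (1): if $0 < \theta < \frac{\pi}{2(L-1)}$, then for every $\Delta t \le L-1$ we have $0 \le \theta\,\Delta t \le \theta(L-1) < \pi/2$, so $\cos(\theta\,\Delta t) > 0$ for all admissible $\Delta t$ and hence $P_{\neg}(\theta) = 0$. This is immediate. For part (3): fix $\Delta t \ge 1$; as $\theta \to \infty$, the point $\theta\,\Delta t \bmod 2\pi$ equidistributes on $[0,2\pi)$ (this is Weyl equidistribution, or one can argue more elementarily that the fraction of $\theta$ in a large interval $[0,T]$ for which $\cos(\theta\,\Delta t)<0$ tends to $1/2$). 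Since the bad arcs have total measure $\pi$ out of $2\pi$, the indicator $\mathbf{1}[\cos(\theta\,\Delta t)<0]$ has Cesàro mean $1/2$ for each fixed $\Delta t \ge 1$; the single term $\Delta t = 0$ (where $\cos 0 = 1 > 0$ always) contributes $0$, so $\lim_{\theta\to\infty} P_{\neg}(\theta) = \frac{1}{L}\sum_{\Delta t=1}^{L-1}\frac12 = \frac{L-1}{2L} \to \frac12$; more carefully, one should interpret the limit in the sense the paper intends (e.g. along a sequence, or as a limsup/average), and I would state the averaged version to be safe, or note that the stated $\tfrac12$ holds in the averaged sense with an $O(1/L)$ discrepancy that I would absorb by taking $L$ large or by stating $\sup_\theta P_\neg(\theta) \to 1/2$.

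The hard part is part (2), monotonicity of $P_{\neg}(\theta)$ for $\theta \ge \frac{\pi}{2(L-1)}$. The plan is to rescale: write $\phi = \theta(L-1)$, so $\theta\,\Delta t = \phi \cdot \frac{\Delta t}{L-1}$ with $\frac{\Delta t}{L-1}$ ranging over the equally-spaced grid $\{0, \frac{1}{L-1}, \dots, 1\}$ in $[0,1]$. Then $P_{\neg}$ counts how many grid points $s$ satisfy $\cos(\phi s) < 0$, i.e. $\phi s \in \bigcup_j(\pi/2 + 2\pi j, 3\pi/2 + 2\pi j)$. As $\phi$ increases, each ``bad'' arc in $s$-space, namely $\left(\frac{\pi/2 + 2\pi j}{\phi}, \frac{3\pi/2 + 2\pi j}{\phi}\right)$, shrinks and slides toward $0$, while new bad arcs enter from the right edge $s = 1$. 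The intended monotonicity presumably relies on a counting argument that the number of grid points captured is non-decreasing: as $\phi$ grows, although individual arcs shrink, arcs get denser (more of them fit in $[0,1]$), and the standard heuristic is that the count of lattice points in $\bigcup(\text{arcs of total length } \to \phi/2 \text{ times the natural density})$ grows. I would make this rigorous by showing the count equals (number of half-periods of $\cos(\phi s)$ that are ``negative'' and fully inside $[0,1]$) times the grid density, up to boundary corrections, and that this is non-decreasing in $\phi$; the cleanest route is to compare $P_{\neg}(\theta)$ and $P_{\neg}(\theta')$ for $\theta < \theta'$ by a direct injection of $S_{\neg}(\theta)$-contributing indices into $S_{\neg}(\theta')$-contributing indices, possibly after grouping $\Delta t$'s into blocks corresponding to full rotations. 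I anticipate this injection/counting step is where the real work lies and where one must be careful about edge effects near $\Delta t = 0$ and $\Delta t = L-1$; if a clean monotonicity fails to hold pointwise, the fallback is to prove it holds up to an $O(1/L)$ error or in an averaged/asymptotic sense, which still suffices for the paper's downstream use (arguing lower frequencies are preferable for temporal modeling).
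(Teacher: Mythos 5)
Your plan follows essentially the same route as the paper's proof: part (1) by bounding \(\theta\,\Delta t<\pi/2\), part (2) by counting negative half-periods of cosine as the interval of length \(\theta(L-1)\) grows (the paper, like you, only asserts this ``up to \(O(1/L)\) rounding errors on the discrete grid''), and part (3) by equidistribution of \(\theta\,\Delta t\) modulo \(2\pi\). The caveats you flag --- that pointwise monotonicity in (2) may fail and that the limit in (3) strictly speaking only holds in an averaged sense with the \(\Delta t=0\) term giving \((L-1)/(2L)\) --- are exactly the imprecisions the paper's own proof leaves unresolved, so your proposal is not weaker than the published argument.
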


\begin{proof}[Proof of Lemma~\ref{lem:neg_prob_monotone}]
1. \textbf{No negative region for small \(\theta\).}
   If \(0<\theta<\frac{\pi}{2(L-1)}\), then for every \(\Delta t \in\{0,\dots,L-1\}\) we have
   \[
     0 \;\le\; \theta\,\Delta t\;<\;\theta\,(L-1) \;<\;\pi/2,
   \]
   so \(\cos(\theta\,\Delta t)>0\).  Hence \(P_{\neg}(\theta)=0\).

2. \textbf{Monotonicity once the first zero enters.}
   As soon as \(\theta\ge \frac{\pi}{2(L-1)}\), the point satisfying \(\theta\,\Delta t=\pi/2\) lies in \(\{0,\dots,L-1\}\).  Each further increase in \(\theta\) extends the interval of length \(\theta(L-1)\), adding more half-periods of cosine.  Each added half-period contains exactly one “negative” region of length \(\pi\).  Therefore the count
   \(\bigl|\{\Delta : \cos(\theta\Delta)<0\}\bigr|\) (and hence \(P_{\neg}(\theta)\)) can only stay the same or increase, up to \(O(1/L)\) rounding errors on the discrete grid.

3. \textbf{Limit to one half for large \(\theta\).}
   For large \(\theta\), the values \(\{\theta\,\Delta\}_{\Delta=0}^{L-1}\) become equidistributed mod \(2\pi\). Since the negative region \(\{\;x\;\bmod\;2\pi\mid \cos x<0\}\) has total length \(\pi\) over each \(2\pi\)-cycle, one finds
   \[
     \lim_{\theta\to\infty}P_{\neg}(\theta)
     \;=\;
     \frac{\pi}{2\pi}
     \;=\;
     \tfrac12.
   \]
\end{proof}

We can now prove the frequency allocation strategies in current multimodal RoPE cannot reliably maintain the semantic preference property, i.e., semantically similar tokens should receive higher attention than semantically unrelated pairs.

\textbf{Theorem~\ref{theorem:semantic_preference}.} \textit{Let $X = [x_1, x_2, \dots, x_L]$ be an input sequence, and let RoPE use any fixed set of temporal frequencies (e.g., highest or lowest). Then there exists a critical length $L_c$ such that for all $L \ge L_c$, the semantic preference property (Definition~\ref{def:semantic_pref}) is violated.}

\begin{proof}
We first recall the definition of multimodal RoPE, where the rotation matrix is partitioned to encode different dimensions:
\[\mathbf{R}_{t,x,y} = \operatorname{diag}(\mathbf{R}_t, \mathbf{R}_x,\mathbf{R}_y),\]
where $\mathbf{R}_t$, $\mathbf{R}_x$, and $\mathbf{R}_y$ are rotation matrices applied to temporal, horizontal spatial, and vertical spatial dimensions, respectively, with each dimension carrying a frequency of \(\theta_i=b^{-2i/d}, i \in \{0,\dots,d/2-1\}\). Note that the (\(t,x,y\)) ordering is purely notational and does not constrain the actual dimension allocation strategy.

Assume that each component of the query vector \(\mathbf{q}\) is independently and identically distributed with mean \(\mu\) and variance \(\sigma^2\). We denote key vector that is semantically similar to \(\mathbf{q}\) as \(\mathbf{k}'=\mathbf{q}+\delta \), where \(\delta\) is a zero-mean perturbation. The semantically unrelated key vector \(\mathbf{k}\) is independently drawn with the same distribution as $\mathbf{q}$. Let \(\Delta t, \Delta x, \Delta y\) denote relative temporal and spatial distances between the query and each key. According to Definition~\ref{def:semantic_pref}, the semantic preference property requires that:
\begin{equation}
    \centering
\begin{aligned}
    &\mathbb{E}_{\mathbf{q},\mathbf{k},\delta}[\mathbf{q}\mathbf{R}_{\Delta t, \Delta x, \Delta y}\mathbf{k}'^{\top}-\mathbf{q}\mathbf{R}_{\Delta t, \Delta x, \Delta y}\mathbf{k}^{\top}]\\ 
    &=\mathbb{E}_{\mathbf{q},\mathbf{k},\delta}[\mathbf{q}\mathbf{R}_{\Delta t, \Delta x, \Delta y}\mathbf{(q+\delta)}^{\top}-\mathbf{q}\mathbf{R}_{\Delta t, \Delta x, \Delta y}\mathbf{k}^{\top}] 
    \\
    &=\mathbb{E}_{\mathbf{q}}[\mathbf{q}\mathbf{R}_{\Delta t, \Delta x, \Delta y}\mathbf{q}^{\top}] -\mathbb{E}_{\mathbf{q},\mathbf{k}}[\mathbf{q}\mathbf{R}_{\Delta t, \Delta x, \Delta y}\mathbf{k}^{\top}] 
    \\
    &=\mathbb{E}_{\mathbf{q}}[\mathbf{q}\mathbf{R}_{\Delta t, \Delta x, \Delta y}\mathbf{q}^{\top}] -\mu^2\mathbf{R}_{\Delta t, \Delta x, \Delta y}
    \\
    &=\sum_{i \in i_t}2(\mu^2+\sigma^2)\mathrm{cos}(\Delta t)\theta_i+\sum_{i \in i_x}2(\mu^2+\sigma^2)\mathrm{cos}(\Delta x)\theta_i+\sum_{i \in i_y}2(\mu^2+\sigma^2)\mathrm{cos}(\Delta y)\theta_i-\\
    &\sum_{i \in i_t}2\mu^2\mathrm{cos}(\Delta t)\theta_i+\sum_{i \in i_x}2\mu^2\mathrm{cos}(\Delta x)\theta_i+\sum_{i \in i_y}2\mu^2\mathrm{cos}(\Delta y)\theta_i \\
    &=\sum_{i \in i_t}2\sigma^2\mathrm{cos}(\Delta t \cdot \theta_i)+\sum_{i \in i_x}2\sigma^2\mathrm{cos}(\Delta x \cdot \theta_i)+\sum_{i \in i_y}2\sigma^2\mathrm{cos}(\Delta y \cdot \theta_i) \ge 0,
\end{aligned}
\end{equation}
where \(i_t, i_x,i_y\) denote dimensions allocated to encode temporal \((t)\), horizontal spatial \((x)\), and vertical spatial \((y)\) information. To satisfy the semantic preference property (Definition~\ref{def:semantic_pref}), the expected attention between a query and its semantically similar key should remain higher than that for an unrelated key, regardless of their relative distance. This implies the following condition must hold universally:
\begin{equation}
\begin{aligned}
    \sum_{i \in i_t}2\sigma^2\mathrm{cos}(\Delta t \cdot \theta_i)+\sum_{i \in i_x}2\sigma^2\mathrm{cos}(\Delta x \cdot \theta_i)+\sum_{i \in i_y}2\sigma^2\mathrm{cos}(\Delta y \cdot \theta_i) \ge 0, \\
    \Delta t\in\{0,1,\dots,L-1\}, \Delta x\in\{0,1,\dots,H\}, \Delta y\in\{0,1,\dots,W\}.
\end{aligned}
\label{eq:semantic_pref}
\end{equation}
Now consider a long-context scenario, where \(L \gg H, W\), we can now theoretically prove that why VideoRoPE \cite{wei2025videorope} (using lowest frequencies for \(t\)) is better than M-RoPE
(using highest frequencies for \(t\)) in maintaining semantic preference property in long contexts. Simply, by Lemma~\ref{lem:neg_prob_monotone}, we show that when the context length \(L\) is sufficiently large, the probability that \(cos(\Delta t \cdot \theta_i)\) leads to negative values becomes higher when \(\theta_i\) becomes larger. Therefore, lower frequencies, which rotate less, are less likely to violate the semantic preference property. 

However, despite using the lowest frequencies, VideoRoPE still fails to guarantee that the semantic preference property holds for all context lengths (Equation~\ref{eq:semantic_pref}). Let VideoRoPE allocate only the smallest frequency to the temporal dimensions, instead of \(|i_t|\) smallest frequencies:
\[
  \theta_{\min}
  \;=\;
  b^{-2(\frac d2-1)/d},
\]
so that in Equation~\eqref{eq:semantic_pref} the temporal sum reduces to:
\[
  2\sigma^2\ |i_t| \cos(\Delta t \cdot \theta_{\min}).
\]
Here, under the reasonable assumption that semantically related tokens co-occur in nearby spatial positions across frames, the spatial sums in Equation~\ref{eq:semantic_pref} remains non-negative. Thus we only consider the temporal sum in Equation~\ref{eq:semantic_pref}:
\[
  2\sigma^2 |i_t| \cos(\Delta t \cdot \theta_{\min}).
\]
Now pick any context length \(L\) so large that there exists
\[
  \Delta t
  \;\in\;
  \{0,1,\dots,L-1\}
  \quad\text{with}\quad
  \Delta t\;\theta_{\min}
  \;\in\;
  \bigl(\tfrac{\pi}{2},\,\tfrac{3\pi}{2}\bigr).
\]
Such a \(\Delta t\) indeed exists as soon as
\(\theta_{\min}\,(L-1)>\tfrac{\pi}{2}\), i.e. for any
\[
  L \;>\; L_c=\frac{\pi}{2\,\theta_{\min}} +1.
\]
For that choice of \(\Delta t\), we have
\[
  \cos(\Delta t\ \cdot \theta_{\min})<0,
\]
and hence the left‐hand side of
Equation~\eqref{eq:semantic_pref} becomes
\[
  2\sigma^2\ |i_t| \cos(\Delta t\ \cdot \theta_{\min})
  <0.
\]
This single counterexample \((\Delta t,\Delta x,\Delta y)\) violates the
semantic preference condition, since no further temporal frequencies are
available to “rescue” the sum. Therefore, despite using the lowest frequencies for temporal modeling, VideoRoPE still fails to guarantee the semantic preference property. In conclusion, all frequency allocation strategies in current multimodal RoPEs fail to maintain the semantic preference property in Definition~\ref{def:semantic_pref}, completing the proof.
\end{proof}

\section{Further Experimental Details}
\label{app:exp_detail}
In this section, we provide further details of our experiments, including benchmark descriptions, experimental settings, and further results.

\begin{figure}[t]
    \centering
    \includegraphics[width=0.98\textwidth]{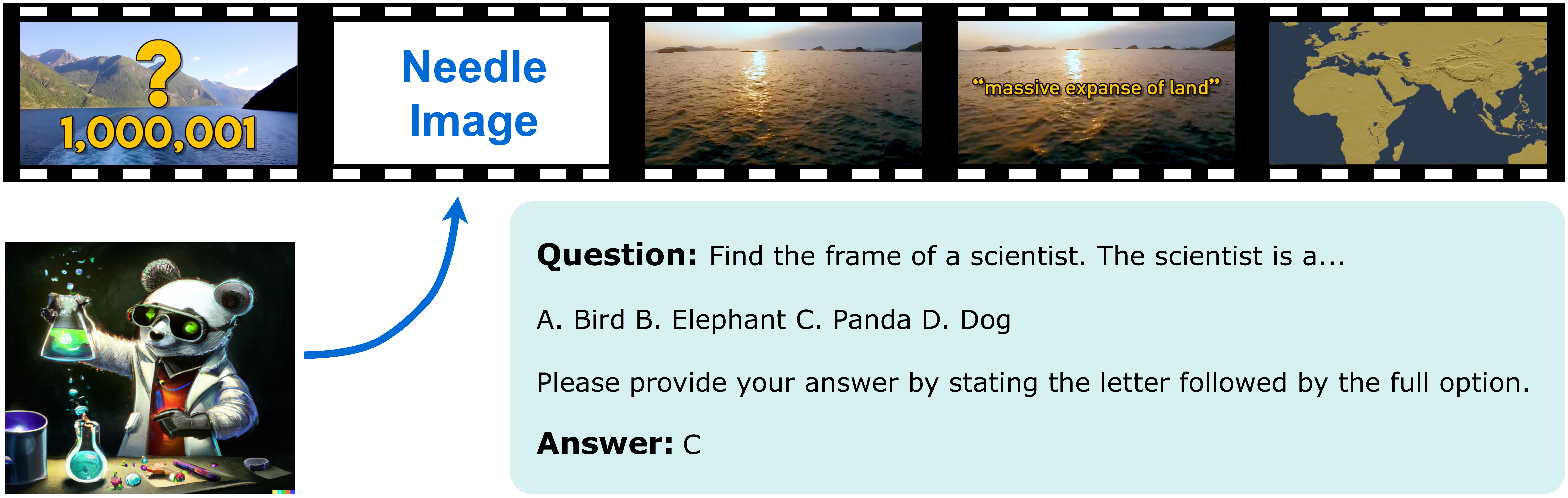}
    \caption{Illustration of V-NIAH, which consists of a randomly inserted needle image, a haystack video, and a specific question related to the needle.}
    \label{fig:V-NIAH}
\end{figure}

\subsection{Detailed Benchmark Description}
\label{app:benchmark}

There is a growing interest in video generation and understanding~\cite{ren2022shunted, ren2025flowar, ren2025xar, lin2024video, cheng2024videollama}, given their broad applications in content creation and analysis. In this subsection, we provide detailed descriptions of the video benchmarks we used in the experiments, i.e.,  LongVideoBench \cite{wu2024longvideobench}, Video-MME \cite{fu2024video}, MLVU \cite{zhou2024mlvu}, and V-NIAH \cite{zhang2024long}.
\begin{itemize}[leftmargin=8pt]
    \item  \textbf{LongVideoBench} is a comprehensive benchmark for evaluating Vision-Language Models on long video understanding tasks. Unlike traditional video benchmarks that focus on short clips under one minute, this dataset features videos ranging from 8 seconds to 1 hour across diverse sources, including everyday life, movies, knowledge, and news. The benchmark encompasses 17 fine-grained question categories organized into two levels: perception and relation. In our experiment, questions that are free from subtitles are retained.
    
    \item  \textbf{Video-MME} is a full-spectrum evaluation benchmark of Vision-Language Models in video analysis, spanning 6 primary visual domains with
    30 subfields to ensure generalizability. It features temporal diversity by incorporating both short- (<2 minutes), medium- (4-15 minutes), and long-term videos (30-60 minutes), ranging from 11 seconds to 1 hour.
    
    \item  \textbf{MLVU} is a high-quality benchmark designed to evaluate the video understanding capabilities of Vision-Language Models. The temporal duration of videos within MLVU spans from 3 minutes to 2 hours, covering genres such as movies, life records, and egocentric videos. In our experiment, we evaluate all methods on the following multiple-choice tasks: Action Count, Action Order, Topic Reasoning, Ego Reasoning, Needle QA, Plot QA, and Anomaly Recognition.

    \item \textbf{V-NIAH} is a challenging benchmark designed to evaluate VLMs' ability to identify specific frames within long videos. In this task, a "needle" image is inserted into a "haystack" video, and the VLMs are required to answer specific questions about this "needle" image, as shown in Figure~\ref{fig:V-NIAH}. Following the settings in V-NIAH \cite{zhang2024long}, we utilize a haystack video with 1-hour duration (3,000 frames). The needle image is inserted at 20\% depth intervals (e.g., a frame depth of 0\% would place the needle image at the very beginning of the video.)
    
\end{itemize}

\begin{table}[h]
\centering
\caption{Quantitative performance of different RoPE variants on V-NIAH. Here, we report the average accuracy across different context lengths and frame depths.}
\vspace{0.5em}
\begin{tabular}{l|cccc}
\toprule
       & Vanilla RoPE & M-RoPE & VideoRoPE & HoPE (ours) \\ \midrule
V-NIAH & 21.00         & 47.11  & 
\underline{52.00}      & \textbf{63.56}       \\ 
\bottomrule
\end{tabular}
\label{tab:quant_VNIAH}
\end{table}

\subsection{Quantitative Results on V-NIAH}

Here, we provide the quantitative results of different RoPE variants on long video retrieval task in Table~\ref{tab:quant_VNIAH}. It can be observed that our HoPE demonstrates a 22.23\% improvement compared to the best baseline, justifying its effectiveness in multimodal long-context modeling.

\subsection{Ideal Condition for Semantic Preference}
\label{app:ideal_sem_pref}

As discussed after Theorem~\ref{theorem:hybrid_frequency}, the semantic preference property (Definition~\ref{def:semantic_pref}) invariably holds for any context length \(t\) and spatial size \(x,y\) when we set \(|i_t|=d/4,|i_x|=|i_y|=d/8\) and \(\theta_i=0, i \in i_t\), since Lemma~\ref{lemma:semantic_pref_condition} reduces to:

\[
\sum_{i=0}^{d/8-1}2\sigma^2(2+\mathrm{cos}(\Delta x \cdot \theta_{2i})+\mathrm{cos}(\Delta y \cdot \theta_{2i+1})) \geq 0.
\]

In our original HoPE implementation, the frequencies allocated to \(t,x,y\) are \(16,24,24\), respectively, with the lowest \(16\) frequencies for  \(t\) set to zero. For this proposed variant (HoPE-X), we redistribute these allocations to \(32,16,16\) for \(t,x,y\), respectively, while setting the lowest \(32\) frequencies for \(t\) to zero. To evaluate the comparative effectiveness of these configurations, we conduct further experiments on LongVideoBench. 

\begin{table}[h]
\centering
\caption{Performance comparison between HoPE-X and HoPE.}
\vspace{1em}
\begin{tabular}{l|cccc}
\toprule
\multirow{2}{*}{\centering Method} & \multicolumn{4}{c}{LongVideoBench} \\
\cmidrule{2-5}
 & 8k & 16k & 32k & 64k \\
\midrule
HoPE-X & 52.68 & 52.73 & 53.01 & 46.32 \\
HoPE   & 54.11 & 55.09 & 55.34 & 51.22 \\
\bottomrule
\end{tabular}
\label{tab:ideal}
\end{table}

Table~\ref{tab:ideal} demonstrates that HoPE consistently outperforms HoPE-X across diverse context lengths. We deduce that the inferior performance of HoPE-X is due to its decreased dimensions allocated for spatial modeling. While this configuration helps to maintain the semantic preference property, it negatively impacts HoPE-X's ability to model local features. Therefore, it is necessary to keep adequate dimensions for spatial modeling in multimodal RoPE.



\newpage
\section*{NeurIPS Paper Checklist}

\begin{enumerate}

\item {\bf Claims}
    \item[] Question: Do the main claims made in the abstract and introduction accurately reflect the paper's contributions and scope?
    \item[] Answer: \answerYes{} 
    \item[] Justification: Our main claims are summarized in Figure~\ref{fig:1}. Section~\ref{sec:analysis} provides theoretical analysis and Section~\ref{sec:method} offers detailed methodology.
    \item[] Guidelines:
    \begin{itemize}
        \item The answer NA means that the abstract and introduction do not include the claims made in the paper.
        \item The abstract and/or introduction should clearly state the claims made, including the contributions made in the paper and important assumptions and limitations. A No or NA answer to this question will not be perceived well by the reviewers. 
        \item The claims made should match theoretical and experimental results, and reflect how much the results can be expected to generalize to other settings. 
        \item It is fine to include aspirational goals as motivation as long as it is clear that these goals are not attained by the paper. 
    \end{itemize}

\item {\bf Limitations}
    \item[] Question: Does the paper discuss the limitations of the work performed by the authors?
    \item[] Answer: \answerYes{} 
    \item[] Justification: Yes, please see Section~\ref{sec:conclusion}.
    \item[] Guidelines:
    \begin{itemize}
        \item The answer NA means that the paper has no limitation while the answer No means that the paper has limitations, but those are not discussed in the paper. 
        \item The authors are encouraged to create a separate "Limitations" section in their paper.
        \item The paper should point out any strong assumptions and how robust the results are to violations of these assumptions (e.g., independence assumptions, noiseless settings, model well-specification, asymptotic approximations only holding locally). The authors should reflect on how these assumptions might be violated in practice and what the implications would be.
        \item The authors should reflect on the scope of the claims made, e.g., if the approach was only tested on a few datasets or with a few runs. In general, empirical results often depend on implicit assumptions, which should be articulated.
        \item The authors should reflect on the factors that influence the performance of the approach. For example, a facial recognition algorithm may perform poorly when image resolution is low or images are taken in low lighting. Or a speech-to-text system might not be used reliably to provide closed captions for online lectures because it fails to handle technical jargon.
        \item The authors should discuss the computational efficiency of the proposed algorithms and how they scale with dataset size.
        \item If applicable, the authors should discuss possible limitations of their approach to address problems of privacy and fairness.
        \item While the authors might fear that complete honesty about limitations might be used by reviewers as grounds for rejection, a worse outcome might be that reviewers discover limitations that aren't acknowledged in the paper. The authors should use their best judgment and recognize that individual actions in favor of transparency play an important role in developing norms that preserve the integrity of the community. Reviewers will be specifically instructed to not penalize honesty concerning limitations.
    \end{itemize}

\item {\bf Theory assumptions and proofs}
    \item[] Question: For each theoretical result, does the paper provide the full set of assumptions and a complete (and correct) proof?
    \item[] Answer: \answerYes{} 
    \item[] Justification: We provide the proofs in Appendix~\ref{app:proofs}.
    \item[] Guidelines:
    \begin{itemize}
        \item The answer NA means that the paper does not include theoretical results. 
        \item All the theorems, formulas, and proofs in the paper should be numbered and cross-referenced.
        \item All assumptions should be clearly stated or referenced in the statement of any theorems.
        \item The proofs can either appear in the main paper or the supplemental material, but if they appear in the supplemental material, the authors are encouraged to provide a short proof sketch to provide intuition. 
        \item Inversely, any informal proof provided in the core of the paper should be complemented by formal proofs provided in appendix or supplemental material.
        \item Theorems and Lemmas that the proof relies upon should be properly referenced. 
    \end{itemize}

    \item {\bf Experimental result reproducibility}
    \item[] Question: Does the paper fully disclose all the information needed to reproduce the main experimental results of the paper to the extent that it affects the main claims and/or conclusions of the paper (regardless of whether the code and data are provided or not)?
    \item[] Answer: \answerYes{} 
    \item[] Justification: Experimental setups are provided in Section~\ref{sec:exp_setup}. 
    \item[] Guidelines:
    \begin{itemize}
        \item The answer NA means that the paper does not include experiments.
        \item If the paper includes experiments, a No answer to this question will not be perceived well by the reviewers: Making the paper reproducible is important, regardless of whether the code and data are provided or not.
        \item If the contribution is a dataset and/or model, the authors should describe the steps taken to make their results reproducible or verifiable. 
        \item Depending on the contribution, reproducibility can be accomplished in various ways. For example, if the contribution is a novel architecture, describing the architecture fully might suffice, or if the contribution is a specific model and empirical evaluation, it may be necessary to either make it possible for others to replicate the model with the same dataset, or provide access to the model. In general. releasing code and data is often one good way to accomplish this, but reproducibility can also be provided via detailed instructions for how to replicate the results, access to a hosted model (e.g., in the case of a large language model), releasing of a model checkpoint, or other means that are appropriate to the research performed.
        \item While NeurIPS does not require releasing code, the conference does require all submissions to provide some reasonable avenue for reproducibility, which may depend on the nature of the contribution. For example
        \begin{enumerate}
            \item If the contribution is primarily a new algorithm, the paper should make it clear how to reproduce that algorithm.
            \item If the contribution is primarily a new model architecture, the paper should describe the architecture clearly and fully.
            \item If the contribution is a new model (e.g., a large language model), then there should either be a way to access this model for reproducing the results or a way to reproduce the model (e.g., with an open-source dataset or instructions for how to construct the dataset).
            \item We recognize that reproducibility may be tricky in some cases, in which case authors are welcome to describe the particular way they provide for reproducibility. In the case of closed-source models, it may be that access to the model is limited in some way (e.g., to registered users), but it should be possible for other researchers to have some path to reproducing or verifying the results.
        \end{enumerate}
    \end{itemize}

\item {\bf Open access to data and code}
    \item[] Question: Does the paper provide open access to the data and code, with sufficient instructions to faithfully reproduce the main experimental results, as described in supplemental material?
    \item[] Answer: \answerYes{} 
    \item[] Justification: We use fully public datasets in our experiments and the details are provided in Section~\ref{sec:exp_setup} and Appendix~\ref{app:benchmark}. Code will be released in camera-ready version.
    \item[] Guidelines:
    \begin{itemize}
        \item The answer NA means that paper does not include experiments requiring code.
        \item Please see the NeurIPS code and data submission guidelines (\url{https://nips.cc/public/guides/CodeSubmissionPolicy}) for more details.
        \item While we encourage the release of code and data, we understand that this might not be possible, so “No” is an acceptable answer. Papers cannot be rejected simply for not including code, unless this is central to the contribution (e.g., for a new open-source benchmark).
        \item The instructions should contain the exact command and environment needed to run to reproduce the results. See the NeurIPS code and data submission guidelines (\url{https://nips.cc/public/guides/CodeSubmissionPolicy}) for more details.
        \item The authors should provide instructions on data access and preparation, including how to access the raw data, preprocessed data, intermediate data, and generated data, etc.
        \item The authors should provide scripts to reproduce all experimental results for the new proposed method and baselines. If only a subset of experiments are reproducible, they should state which ones are omitted from the script and why.
        \item At submission time, to preserve anonymity, the authors should release anonymized versions (if applicable).
        \item Providing as much information as possible in supplemental material (appended to the paper) is recommended, but including URLs to data and code is permitted.
    \end{itemize}

\item {\bf Experimental setting/details}
    \item[] Question: Does the paper specify all the training and test details (e.g., data splits, hyperparameters, how they were chosen, type of optimizer, etc.) necessary to understand the results?
    \item[] Answer: \answerYes{} 
    \item[] Justification: Please refer to Section~\ref{sec:exp_setup}.
    \item[] Guidelines:
    \begin{itemize}
        \item The answer NA means that the paper does not include experiments.
        \item The experimental setting should be presented in the core of the paper to a level of detail that is necessary to appreciate the results and make sense of them.
        \item The full details can be provided either with the code, in appendix, or as supplemental material.
    \end{itemize}

\item {\bf Experiment statistical significance}
    \item[] Question: Does the paper report error bars suitably and correctly defined or other appropriate information about the statistical significance of the experiments?
    \item[] Answer: \answerNo{} 
    \item[] Justification: In our experiments, the variance between different runs is negligible. Additionally, our training and evaluation pipelines are deterministic.
    \item[] Guidelines:
    \begin{itemize}
        \item The answer NA means that the paper does not include experiments.
        \item The authors should answer "Yes" if the results are accompanied by error bars, confidence intervals, or statistical significance tests, at least for the experiments that support the main claims of the paper.
        \item The factors of variability that the error bars are capturing should be clearly stated (for example, train/test split, initialization, random drawing of some parameter, or overall run with given experimental conditions).
        \item The method for calculating the error bars should be explained (closed form formula, call to a library function, bootstrap, etc.)
        \item The assumptions made should be given (e.g., Normally distributed errors).
        \item It should be clear whether the error bar is the standard deviation or the standard error of the mean.
        \item It is OK to report 1-sigma error bars, but one should state it. The authors should preferably report a 2-sigma error bar than state that they have a 96\% CI, if the hypothesis of Normality of errors is not verified.
        \item For asymmetric distributions, the authors should be careful not to show in tables or figures symmetric error bars that would yield results that are out of range (e.g. negative error rates).
        \item If error bars are reported in tables or plots, The authors should explain in the text how they were calculated and reference the corresponding figures or tables in the text.
    \end{itemize}

\item {\bf Experiments compute resources}
    \item[] Question: For each experiment, does the paper provide sufficient information on the computer resources (type of compute workers, memory, time of execution) needed to reproduce the experiments?
    \item[] Answer: \answerYes{} 
    \item[] Justification: Please refer to Section~\ref{sec:exp_setup}.
    \item[] Guidelines:
    \begin{itemize}
        \item The answer NA means that the paper does not include experiments.
        \item The paper should indicate the type of compute workers CPU or GPU, internal cluster, or cloud provider, including relevant memory and storage.
        \item The paper should provide the amount of compute required for each of the individual experimental runs as well as estimate the total compute. 
        \item The paper should disclose whether the full research project required more compute than the experiments reported in the paper (e.g., preliminary or failed experiments that didn't make it into the paper). 
    \end{itemize}
    
\item {\bf Code of ethics}
    \item[] Question: Does the research conducted in the paper conform, in every respect, with the NeurIPS Code of Ethics \url{https://neurips.cc/public/EthicsGuidelines}?
    \item[] Answer: \answerYes{} 
    \item[] Justification: We followed the NeurIPS Code of Ethics.
    \item[] Guidelines:
    \begin{itemize}
        \item The answer NA means that the authors have not reviewed the NeurIPS Code of Ethics.
        \item If the authors answer No, they should explain the special circumstances that require a deviation from the Code of Ethics.
        \item The authors should make sure to preserve anonymity (e.g., if there is a special consideration due to laws or regulations in their jurisdiction).
    \end{itemize}

\item {\bf Broader impacts}
    \item[] Question: Does the paper discuss both potential positive societal impacts and negative societal impacts of the work performed?
    \item[] Answer: \answerNA{} 
    \item[] Justification: This work focuses on improving the long-context capabilities of Vision-Language Models, which uses public datasets and is purely academic. We believe it has no direct societal impact.
    \item[] Guidelines:
    \begin{itemize}
        \item The answer NA means that there is no societal impact of the work performed.
        \item If the authors answer NA or No, they should explain why their work has no societal impact or why the paper does not address societal impact.
        \item Examples of negative societal impacts include potential malicious or unintended uses (e.g., disinformation, generating fake profiles, surveillance), fairness considerations (e.g., deployment of technologies that could make decisions that unfairly impact specific groups), privacy considerations, and security considerations.
        \item The conference expects that many papers will be foundational research and not tied to particular applications, let alone deployments. However, if there is a direct path to any negative applications, the authors should point it out. For example, it is legitimate to point out that an improvement in the quality of generative models could be used to generate deepfakes for disinformation. On the other hand, it is not needed to point out that a generic algorithm for optimizing neural networks could enable people to train models that generate Deepfakes faster.
        \item The authors should consider possible harms that could arise when the technology is being used as intended and functioning correctly, harms that could arise when the technology is being used as intended but gives incorrect results, and harms following from (intentional or unintentional) misuse of the technology.
        \item If there are negative societal impacts, the authors could also discuss possible mitigation strategies (e.g., gated release of models, providing defenses in addition to attacks, mechanisms for monitoring misuse, mechanisms to monitor how a system learns from feedback over time, improving the efficiency and accessibility of ML).
    \end{itemize}
    
\item {\bf Safeguards}
    \item[] Question: Does the paper describe safeguards that have been put in place for responsible release of data or models that have a high risk for misuse (e.g., pretrained language models, image generators, or scraped datasets)?
    \item[] Answer: \answerNA{} 
    \item[] Justification: This paper does not pose safety risks.
    \item[] Guidelines:
    \begin{itemize}
        \item The answer NA means that the paper poses no such risks.
        \item Released models that have a high risk for misuse or dual-use should be released with necessary safeguards to allow for controlled use of the model, for example by requiring that users adhere to usage guidelines or restrictions to access the model or implementing safety filters. 
        \item Datasets that have been scraped from the Internet could pose safety risks. The authors should describe how they avoided releasing unsafe images.
        \item We recognize that providing effective safeguards is challenging, and many papers do not require this, but we encourage authors to take this into account and make a best faith effort.
    \end{itemize}

\item {\bf Licenses for existing assets}
    \item[] Question: Are the creators or original owners of assets (e.g., code, data, models), used in the paper, properly credited and are the license and terms of use explicitly mentioned and properly respected?
    \item[] Answer: \answerYes{} 
    \item[] Justification: We credited them in appropriate ways and followed their licenses.
    \item[] Guidelines:
    \begin{itemize}
        \item The answer NA means that the paper does not use existing assets.
        \item The authors should cite the original paper that produced the code package or dataset.
        \item The authors should state which version of the asset is used and, if possible, include a URL.
        \item The name of the license (e.g., CC-BY 4.0) should be included for each asset.
        \item For scraped data from a particular source (e.g., website), the copyright and terms of service of that source should be provided.
        \item If assets are released, the license, copyright information, and terms of use in the package should be provided. For popular datasets, \url{paperswithcode.com/datasets} has curated licenses for some datasets. Their licensing guide can help determine the license of a dataset.
        \item For existing datasets that are re-packaged, both the original license and the license of the derived asset (if it has changed) should be provided.
        \item If this information is not available online, the authors are encouraged to reach out to the asset's creators.
    \end{itemize}

\item {\bf New assets}
    \item[] Question: Are new assets introduced in the paper well documented and is the documentation provided alongside the assets?
    \item[] Answer: \answerNA{} 
    \item[] Justification: No new assets are introduced.
    \item[] Guidelines:
    \begin{itemize}
        \item The answer NA means that the paper does not release new assets.
        \item Researchers should communicate the details of the dataset/code/model as part of their submissions via structured templates. This includes details about training, license, limitations, etc. 
        \item The paper should discuss whether and how consent was obtained from people whose asset is used.
        \item At submission time, remember to anonymize your assets (if applicable). You can either create an anonymized URL or include an anonymized zip file.
    \end{itemize}

\item {\bf Crowdsourcing and research with human subjects}
    \item[] Question: For crowdsourcing experiments and research with human subjects, does the paper include the full text of instructions given to participants and screenshots, if applicable, as well as details about compensation (if any)? 
    \item[] Answer: \answerNA{} 
    \item[] Justification: This paper does not include crowdsourcing experiments and research with human subjects.
    \item[] Guidelines:
    \begin{itemize}
        \item The answer NA means that the paper does not involve crowdsourcing nor research with human subjects.
        \item Including this information in the supplemental material is fine, but if the main contribution of the paper involves human subjects, then as much detail as possible should be included in the main paper. 
        \item According to the NeurIPS Code of Ethics, workers involved in data collection, curation, or other labor should be paid at least the minimum wage in the country of the data collector. 
    \end{itemize}

\item {\bf Institutional review board (IRB) approvals or equivalent for research with human subjects}
    \item[] Question: Does the paper describe potential risks incurred by study participants, whether such risks were disclosed to the subjects, and whether Institutional Review Board (IRB) approvals (or an equivalent approval/review based on the requirements of your country or institution) were obtained?
    \item[] Answer: \answerNA{} 
    \item[] Justification: The paper does not involve crowdsourcing nor research with human subjects.
    \item[] Guidelines:
    \begin{itemize}
        \item The answer NA means that the paper does not involve crowdsourcing nor research with human subjects.
        \item Depending on the country in which research is conducted, IRB approval (or equivalent) may be required for any human subjects research. If you obtained IRB approval, you should clearly state this in the paper. 
        \item We recognize that the procedures for this may vary significantly between institutions and locations, and we expect authors to adhere to the NeurIPS Code of Ethics and the guidelines for their institution. 
        \item For initial submissions, do not include any information that would break anonymity (if applicable), such as the institution conducting the review.
    \end{itemize}

\item {\bf Declaration of LLM usage}
    \item[] Question: Does the paper describe the usage of LLMs if it is an important, original, or non-standard component of the core methods in this research? Note that if the LLM is used only for writing, editing, or formatting purposes and does not impact the core methodology, scientific rigorousness, or originality of the research, declaration is not required.
    \item[] Answer: \answerNA{} 
    \item[] Justification: The core method development in this research does not involve LLMs as any important, original, or non-standard components.
    \item[] Guidelines:
    \begin{itemize}
        \item The answer NA means that the core method development in this research does not involve LLMs as any important, original, or non-standard components.
        \item Please refer to our LLM policy (\url{https://neurips.cc/Conferences/2025/LLM}) for what should or should not be described.
    \end{itemize}

\end{enumerate}

\end{document}